\documentclass{article}

\usepackage{iclr2027_conference,times}

\usepackage[english]{babel}
\usepackage[utf8]{inputenc}

\usepackage{amsmath,amssymb}
\usepackage{amsthm}
\usepackage{mathtools}
\usepackage{bbm}
\usepackage{mathrsfs}

\usepackage{graphicx}
\graphicspath{{figures/}}

\usepackage{tabularx}
\usepackage[table]{xcolor}
\usepackage{tcolorbox}
\usepackage{listings}
\usepackage{float}
\usepackage{enumitem}
\usepackage{nicefrac}
\usepackage{gensymb}
\usepackage{algorithm}
\usepackage{algpseudocode}
\usepackage{comment}
\usepackage{booktabs}
\usepackage{array}
\usepackage{longtable}
\usepackage{ragged2e}
\usepackage{subcaption}

\usepackage{hyperref}
\hypersetup{hidelinks}
\usepackage{url}

\newcommand{\Def}{\overset{\text{def}}{=}}

\theoremstyle{plain}
\newtheorem{proposition}{Proposition}

\theoremstyle{plain}

\theoremstyle{plain}

\title{
Learned Enterprise Data Comprehension: Compression and Routing for Data Agents
}

\author{Ethan Torres\\
permute.ai\\
\texttt{ethanjt2@illinois.edu}
\And
Eric Mills\\
permute.ai\\
\texttt{eric@permute.ai}\\
\texttt{me@ericmills.io}
}

\iclrfinalcopy

\begin{document}

\maketitle

\begin{abstract}
Structured-data agents in enterprise settings must reason over complex data environments whose relevant evidence is distributed across schemas, relationships, policies, and recurring business roles. Modern agentic systems often address this burden through reusable markdown-style memory or skill files that preserve previously discovered information for later queries, reducing the need to rediscover the same structure repeatedly. This is useful, but it obscures a natural division of labor: agents are well suited to semantic reasoning, while learned systems are well suited to predicting and organizing recurring structure. We introduce \emph{latent equivalence learning} to bridge this gap. The framework separates persistent task-relevant identities from their dataset-relative realizations. In our realization, supporting and opposing evidence shape support-realized Gaussian prototypes that learn how those identities are expressed in a particular data environment, while soft-membership profiles retain distinctions lost under a hard assignment. A separate learned query-prototype system represents recurring evidential requirements and maps them through a learned compatibility function into the same persistent identity structure. This identity-factorized, query-conditioned routing materializes the relevant dataset-specific evidence for downstream reasoning, allowing the agent to operate over an already organized evidential state rather than reconstructing cross-schema structure at every query. On the Data Agent Benchmark, spanning 54 queries across 12 heterogeneous datasets, our full implementation achieves \(94.67\%\) dataset-macro stratified Pass@1 over five complete trials and \(258/270\) successful raw query attempts, compared with \(55.51\%\) for the benchmark's Claude Opus~4.6 reference agent, ranking first among 40 leaderboard entries at submission.

\end{abstract}


\section{Introduction}
\label{sec:introduction}

As foundation models and agent frameworks become broadly accessible, organizations increasingly face a different bottleneck: not obtaining a capable model, but exposing the relevant internal evidence to it. This problem is especially relevant for data agents, which answer natural-language questions over heterogeneous organizational data. Consider a company whose two products are managed by different
departments, recording customers under \texttt{buyer\_id} in one system and \texttt{customer\_id} in the other. To answer ``Which customers use both products?'', an agent must identify the shared customer-identifier role, verify which records refer to the same customer, and combine the
relevant data. The goal is to answer the company-wide question without requiring the user to reconcile the departmental schemas first. Such requests combine evidence discovery, retrieval, and execution \citep{ma2026dab}. We refer to the resulting systems challenge as the \emph{three Cs}: \emph{computation}, the work required to locate and
use relevant evidence; \emph{cost}, the training and inference resources consumed; and \emph{context}, the evidence that can be exposed to a language model at once.

Agentic decomposition alone does not resolve these problems. Adding more agents, calls, or reasoning steps can increase the amount of exploration without improving the organization of the evidence being explored. Recent benchmarks show that heterogeneous data questions remain difficult for frontier agents and that increasing the interaction horizon alone does not reliably produce successful long-horizon data analysis \citep{ma2026dab,xu2026longdsbenchfailurelonghorizonagentic}. At the opposite extreme, assigning the entire burden to a monolithic learned predictor is not automatically better. High-dimensional feature spaces make variable selection and representation design increasingly consequential \citep{guyon2003introduction,fan2006statistical}; large training and tuning pipelines impose substantial computational and energy costs \citep{strubell2019energy,schwartz2020greenai}; and models deployed in nonstationary settings remain vulnerable to distribution shift, degraded uncertainty calibration, and interference during continual adaptation \citep{koh2021wilds,ovadia2019uncertainty,parisi2019continual}.

We argue that these difficulties partly reflect a mismatch in the division of labor between learned systems and language-model agents. In enterprise data environments, language-model agents are commonly asked both to discover the latent organization of an unfamiliar data system and to reason over that organization. Conversely, task-specific learned models are often expected to encode the full semantics of every possible query, schema, and execution path. Neither allocation uses the respective system where it is strongest.

\textbf{Our central thesis is that enterprise data agents should reason over learned dataset-relative realizations of persistent task-relevant identities, rather than rediscovering those identities from dataset-specific surface forms at every query.} We introduce \emph{latent equivalence learning} as the representation-learning framework for constructing dataset-relative realizations of persistent identities from supporting and opposing evidence, together with a \emph{query-conditioned routing system} that arises naturally once the learned evidence is organized around those shared identities. The routing system consumes this learned identity structure rather than reconstructing it: recurring query requirements are mapped to the persistent identities they require, and those identities in turn expose their realizations and associated evidence in the current dataset. The framework itself does not prescribe a particular prototype family or evidence-construction mechanism; the Gaussian-prototype system studied here is one concrete realization.

Our earlier example of a business query spanning multiple departmental representations of a customer identifier exposes the distinction that motivates the framework. The recurring role \emph{customer identifier} is a persistent identity: its task-relevant meaning can be reused even when one system records it as \texttt{buyer\_id}, another as \texttt{customer\_id}, and their surrounding schemas differ. The departmental columns, records, relations, and other observed structures are instead evidence objects from which the dataset-specific realization of that identity must be learned. In this sense, the business role is held fixed across the modeled environments, while its realization is latent: learning determines how observed objects express that role from the available supporting and opposing evidence.

This separation extends beyond a single company. Organizations in different industries may instantiate comparable business roles using different names, schemas, and local features. For a scoped deployment domain, we model these recurring roles by a finite inventory of task-relevant identities and learn how each dataset realizes them, rather than attempting open-world semantic equivalence. The representation is trained to absorb variation that is consistent with a shared identity while preserving distinctions that the available evidence indicates remain task-relevant. Where the same roles recur, the identity structure can be reused while dataset-relative realizations are learned locally; genuinely new roles require extending the modeled inventory.

We instantiate this framework with support-realized Gaussian prototypes
\citep{snell2017prototypical,fort2017gaussian}. The evidence-side model learns
the dataset-relative realizations of the persistent identities just
described, while a separate learned query-prototype model represents
recurring evidential requirements. A learned compatibility map converts
those requirements into activations over the persistent identity
coordinates. In the example, a request concerning customers who use both
products activates customer-identifier and product-relation requirements;
the current dataset realization then determines which departmental columns,
records, and relations provide evidence for those identities. A bounded
view of the supporting, opposing, and competing evidence passes to the
semantic Arbiter and tool-equipped data agent for interpretation,
verification of customer-level correspondence, and computation. Query
requirements can therefore be reused across environments even though the
evidence that realizes them changes.

We then retain the full soft profiles because evidence or queries with the same
hard identity may differ in secondary memberships that change which competing evidence should be surfaced. Separately, the Gaussian evidence-side realization admits a local assignment-stability bound characterizing when perturbations of an embedding preserve its hard identity under a fixed prototype bank. Finally, we instantiate the framework in a data-agent harness and evaluate it on the Data Agent Benchmark (DAB), which tests agents across heterogeneous schemas and multi-step data questions~\citep{ma2026dab}. The evaluation tests whether
adding the learned equivalence representation improves a strong agentic system under a fixed downstream reasoner.

\paragraph{Contributions.}
We formulate latent equivalence learning through persistent identities, dataset-relative realizations, and abstract supporting and opposing evidence. We instantiate it with Gaussian evidence prototypes and a separate query-prototype system connected by a learned compatibility map. We
characterize the resulting identity-factorized routing and derive a conditional local assignment-stability certificate, including its connection to the existing signed-evidence losses. We evaluate the resulting representation-and-routing system in a constrained data-agent harness.

\begin{figure}[H]
    \centering
    \includegraphics[width=1\linewidth]{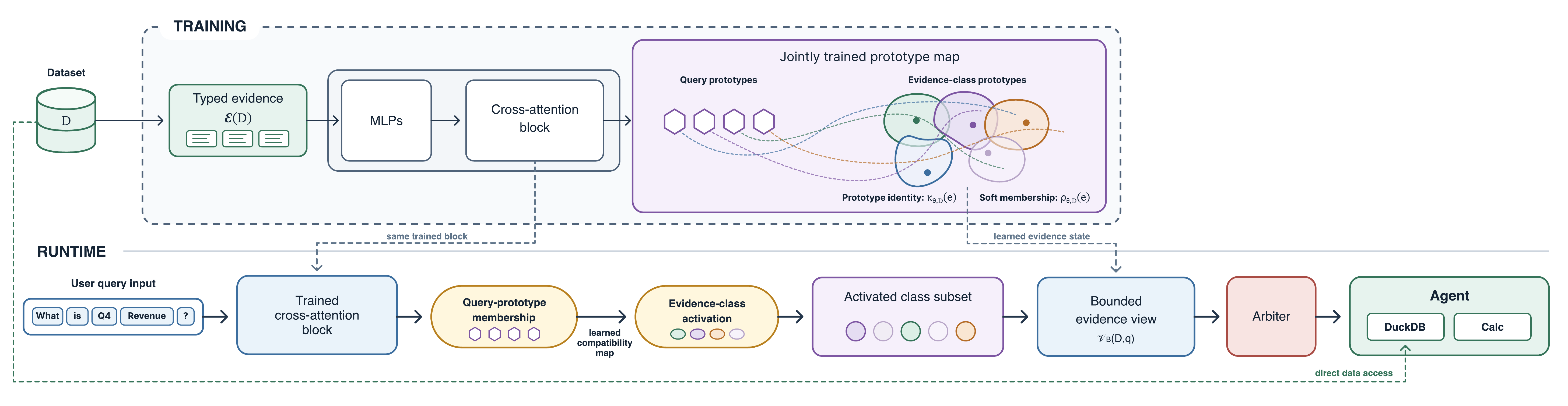}
\caption{\textbf{Training and runtime of the proposed architecture.}
Offline, heterogeneous evidence is encoded into dataset-relative prototype realizations of persistent identities, while a separate query-prototype system learns recurring evidential requirements and their compatibility with those identities. At runtime, a request activates persistent identity coordinates, the current dataset materializes a bounded view of their associated evidence, and the Arbiter and constrained data agent perform semantic interpretation and execution over that view.}
    \label{fig:equivalence_overview}
\end{figure}

\section{Related Work}

\paragraph{Invariance and prototype learning.}
Deep Sets builds permutation invariance into set representations, while
large-margin metric learning learns distances that favor same-label
neighbors over competing labels
\citep{zaheer2017deepsets,JMLR:v10:weinberger09a}.
Prototypical Networks represent episodic classes by support-derived centers;
Gaussian Prototypical Networks extend this construction with precision-weighted
support and class-dependent distances
\citep{snell2017prototypical,fort2017gaussian}.
We adopt these ingredients to realize persistent evidence identities for a
particular dataset and connect them to a separate query-prototype system.
Our local assignment certificate specializes the established relation
between classification margins and Lipschitz sensitivity
\citep{tsuzuku2018lipschitz}; the general certification principle is not
claimed as a new contribution.

\paragraph{Structured correspondence and weak supervision.}
Sherlock and SATO infer semantic column types using column and table
context, while Snorkel combines imperfect labeling sources into training
supervision
\citep{hulsebos2019sherlock,10.14778/3407790.3407793,Ratner_2017}.
Classical structured-data methods provide complementary inductive biases:
Lasso favors sparse linear support, while XGBoost captures nonlinear
feature interactions through boosted trees
\citep{tibshirani1996lasso,chen2016xgboost}.
These approaches illustrate how heterogeneous representations, predictors,
and imperfect supervision can expose different evidence about the same
structured environment. In our formulation, such outputs may contribute
supporting or opposing evidence, but they do not themselves define the
persistent identities. The distinction of interest is instead between a
persistent identity and its dataset-relative realization, whose memberships
and associated evidence remain available for subsequent query-conditioned use.

\paragraph{Evidence routing and data agents.}
Mixture-of-experts models route inputs to specialized computation, and
retrieval-augmented generation conditions language-model generation on
retrieved evidence
\citep{jacobs1991adaptive,shazeer2017outrageously,
lewis2021retrievalaugmentedgenerationknowledgeintensivenlp}.
StructRAG constructs task-appropriate representations of retrieved
information before reasoning \citep{li2025structrag}.
At the agent level, Data Interpreter uses hierarchical graph modeling and
programmable node generation to organize and execute data-science workflows
\citep{hong2024datainterpreter}.
Our focus is the learned interface preceding agent execution: query-prototype
memberships activate persistent evidence identities, and a dataset-specific
map materializes the corresponding supporting and opposing evidence.
This factorization distinguishes reusable query requirements from their
realization in a particular data environment; the downstream agent retains
responsibility for semantic interpretation and execution.

\section{Latent Equivalence Learning}
\label{sec:method}

We consider a scoped enterprise domain in which recurring task-relevant roles can be represented by a finite identity set shared across the modeled datasets. This separates what the framework holds fixed from what must adapt to each environment: the identity meanings provide the common reference structure, while learning determines how those identities are realized in a dataset, how observed evidence is assigned across them, and how a query activates them. We first formalize the evidence interface, then instantiate the dataset-relative realization with Gaussian prototypes, define the query-conditioned routing, and
analyze assignment stability. Parameter dependence is suppressed once the
trained model is fixed.

\subsection{Problem Setup and Signed Evidence}
\label{sec:setup}

The framework must distinguish the task-relevant roles that are shared across the modeled domain from the observed structures through which a particular dataset expresses them. Let $D$ denote a structured data environment available to the system, and let $\mathcal{E}(D)$ be a finite set of \emph{evidence objects} extracted from it. An evidence object $e\in\mathcal{E}(D)$ is an observed structured object---for example, a column, record group, or relation together with its available context---whose task-relevant role is to be represented by the model.
Let $\mathcal{C}$ be a finite, nonempty set indexing the task-relevant roles covered by the modeled domain. We call these \emph{persistent identities} because the meaning associated with each $c\in\mathcal{C}$ is shared across dataset realizations, even though the evidence that realizes it may change with $D$. Thus $c$ denotes the shared role, whereas its dataset-relative realization will be learned from the evidence in $\mathcal{E}(D)$. To represent how an observed evidence object bears on a candidate identity, let
\[
    \omega_D^{+},\omega_D^{-}:
    \mathcal{E}(D)\times\mathcal{C}
    \longrightarrow
    \mathbb{R}_{\geq 0}
\]
For $(e,c)\in\mathcal E(D)\times\mathcal C$,
$\omega_D^{+}(e,c)$ measures evidence supporting the assignment of
evidence object $e$ to identity $c$, while $\omega_D^{-}(e,c)$ measures
evidence opposing that assignment. These maps form the interface between
application-specific evidence construction and latent equivalence learning: the framework specifies how their outputs are used, but not how a particular application must produce them. In the Gaussian realization below, supporting weights contribute to prototype construction and attraction, whereas opposing weights penalize proximity to incompatible identity realizations. Thus supporting evidence determines which observed variation may be aligned, while opposing evidence preserves distinctions that remain task-relevant.

\subsection{Soft Membership and Assignment Fibers}
\label{sec:fibers}

The signed evidence interface describes how individual observations support or
oppose candidate identities; we next need a representation that records the
resulting assignment structure without immediately collapsing it to a single
label. Let $\Delta(\mathcal C)\Def\left\{\pi\in\mathbb R_{\geq 0}^{|\mathcal C|}:\sum_{c\in\mathcal C}\pi_c=1\right\}$ be the probability simplex indexed by the persistent identities, and let $\rho_D:\mathcal E(D)\longrightarrow\Delta(\mathcal C)$ assign each evidence object a normalized soft-membership profile. The coordinate $[\rho_D(e)]_c$ records the relative membership of evidence object $e$ in identity $c$; Section~\ref{sec:prototypes} gives the learned Gaussian realization used in our experiments. To obtain a single identity when a hard readout is required, fix a deterministic
selector
\[
    T:
    2^{\mathcal C}\setminus\{\varnothing\}
    \longrightarrow
    \mathcal C,
    \quad
    T(A)\in A
\]
for every nonempty $A\subseteq\mathcal C$. Define
\begin{equation}
\begin{aligned}
\kappa_D(e)
&\Def
T\!\left(
    \operatorname*{arg\,max}_{c\in\mathcal C}
    [\rho_D(e)]_c
\right),\\
e\sim_D e'
&\quad\Longleftrightarrow\quad
\kappa_D(e)=\kappa_D(e'),
\qquad
Q_{c,D}\Def\kappa_D^{-1}(\{c\}).
\end{aligned}
\label{eq:assignment-fibers}
\end{equation}
Thus $\sim_D$ is an equivalence relation on $\mathcal E(D)$ induced by the
hard identity readout, and its nonempty fibers partition the evidence objects.
The identity $c$ is the shared role label, whereas $Q_{c,D}$ is the
dataset-dependent set of evidence objects assigned to that role. The hard fibers define equivalence classes, but they discard how strongly an
object competes among identities. We therefore retain the full profile
$\rho_D(e)$ so that later evidence materialization can distinguish objects
with the same hard assignment but different secondary memberships. A natural
comparison on these profiles is
\[
 d_D^{\mathrm{mem}}(e,e')
 \Def
 \|\rho_D(e)-\rho_D(e')\|_2 .
\]
This is a pseudometric: zero distance means identical membership profiles and
therefore the same hard identity, while the converse need not hold. The
Gaussian realization below learns $\rho_D$ from prototype distances; the
elementary quotient and pseudometric properties are verified in Appendix~\ref{app:proof-assignment-induced-quotient}.

\subsection{Support-Realized Gaussian Prototypes}
\label{sec:prototypes}

We now instantiate the assignment structure above with the metric realization
used in our experiments. The purpose of this stage is to learn a geometry in
which evidence supporting the same persistent identity is represented near a
shared dataset-relative prototype, while evidence opposing that identity can
be separated from it. This follows the support-based metric-learning view of
Prototypical Networks~\citep[Section~2.2]{snell2017prototypical} and adopts
precision-weighted prototype construction from Gaussian Prototypical
Networks~\citep[Sections~3.1--3.3]{fort2017gaussian}; latent equivalence
learning itself does not require this particular realization. Let $m,p\in\mathbb N_{>0}$ denote the input-feature and learned metric
dimensions. We use a feature map $\Phi_D:\mathcal E(D)\longrightarrow\mathbb R^m$ to encode each evidence object and its available context as the numerical
input to the metric learner. This differs from $\omega_D^\pm(e,c)$: $\Phi_D(e)$ describes the evidence object itself, whereas $\omega_D^\pm(e,c)$ weight its association with a candidate identity. A learned encoder $\psi_\theta:
\mathbb R^m\longrightarrow\mathbb R^p\times\mathbb R_{>0}$ produces $(z_e,s_e)=\psi_\theta\!\left(\Phi_D(e)\right),$ where $z_e\in\mathbb R^p$ is the learned metric representation of $e$ and $s_e>0$ is its predicted scalar precision. For each $c\in\mathcal C$, let $\mu_c^0\in\mathbb R^p$ and
$s_c^0\in\mathbb R_{>0}$ denote its learned prior center and scalar precision, and define $\Lambda_c^0\Def s_c^0 I_p.$ We now use the supporting weights to form the dataset-relative realization of identity $c$. Following the precision-weighted support aggregation of Gaussian Prototypical Networks~\citep[Section~3.3, Eqs.~(5)--(6)]{fort2017gaussian}, evidence with greater association weight and predicted precision contributes more strongly to the realized prototype. Define
\[
    s^{\mathrm{sup}}_{c,D}
    \Def
    \sum_{e\in\mathcal{E}(D)}
    \omega^{+}_{D}(e,c)\,s_e.
\]
Whenever $s^{\mathrm{sup}}_{c,D}>0$, the corresponding support-derived center is
\[
    \mu^{\mathrm{sup}}_{c,D}
    \Def
    \frac{
        \sum_{e\in\mathcal{E}(D)}
        \omega^{+}_{D}(e,c)\,s_e z_e
    }{
        s^{\mathrm{sup}}_{c,D}
    }.
\]
Thus $s^{\mathrm{sup}}_{c,D}$ summarizes the total precision-weighted support
for identity $c$, while $\mu^{\mathrm{sup}}_{c,D}$ is the corresponding
support-derived center. Because the available dataset evidence may realize an identity with varying support, we retain the persistent prior as a fallback rather than replacing it unconditionally by the support-derived center. Let
$\alpha_{c,D}\in[0,1]$ control this interpolation when support is present:
$\alpha_{c,D}=1$ uses the dataset support entirely, while $\alpha_{c,D}=0$ retains the persistent prior. The active prototype center is
\[
    \mu_{c,D}
    =
    \begin{cases}
        \alpha_{c,D}\mu^{\mathrm{sup}}_{c,D}
        +(1-\alpha_{c,D})\mu_c^{0},
        & s^{\mathrm{sup}}_{c,D}>0,\\[4pt]
        \mu_c^{0},
        & s^{\mathrm{sup}}_{c,D}=0.
    \end{cases}
\]
We apply the same support--prior interpolation to precision. For the
scalar-precision realization used here, define
\[
    \Lambda^{\mathrm{sup}}_{c,D}
    \Def
    \bigl(\varepsilon+s^{\mathrm{sup}}_{c,D}\bigr)I_p,
    \quad \varepsilon>0,
\]
and let
\[
    \Lambda_{c,D}
    =
    \begin{cases}
        \alpha_{c,D}\Lambda^{\mathrm{sup}}_{c,D}
        +(1-\alpha_{c,D})\Lambda_c^{0},
        & s^{\mathrm{sup}}_{c,D}>0,\\[4pt]
        \Lambda_c^{0},
        & s^{\mathrm{sup}}_{c,D}=0.
    \end{cases}
\]
Since both component matrices are positive definite, so is $\Lambda_{c,D}$; the diagonal-precision case follows analogously by using coordinatewise positive precisions. The resulting dataset-relative realization of identity $c$ is
$P_{c,D}\Def(\mu_{c,D},\Lambda_{c,D}).$ The three objects introduced so far play different roles: $c$ is the shared identity label, $P_{c,D}$ is its learned geometric realization in dataset $D$, and $Q_{c,D}\subseteq\mathcal E(D)$ is the set of evidence objects assigned to that identity by the hard readout. To compare an embedded evidence object with a realized identity, define the precision-weighted prototype distance
\[
    d_{c,D}:
    \mathbb R^p
    \longrightarrow
    \mathbb R_{\geq 0},
    \quad
    d_{c,D}(z)
    \Def
    \sqrt{
        (z-\mu_{c,D})^\top
        \Lambda_{c,D}
        (z-\mu_{c,D})
    }.
\]
Following distance-based prototypical classification
\citep[Eq.~(2)]{snell2017prototypical} and its precision-weighted Gaussian
extension \citep[Algorithm~1]{fort2017gaussian}, we convert these competing
distances into the soft-membership profile introduced in Section~\ref{sec:fibers}:
\[
    [\rho_D(e)]_c
    \Def
    \frac{
        \exp\!\left(-d_{c,D}(z_e)/\tau\right)
    }{
        \sum_{c'\in\mathcal C}
        \exp\!\left(-d_{c',D}(z_e)/\tau\right)
    },
    \quad
    \tau>0.
\]
Thus $\rho_D(e)$ records relative membership across all realized identities,
while $\kappa_D(e)$ selects an identity attaining the maximal membership,
with ties resolved by $T$. We train this realization using the same supporting--opposing distinction that defines the evidence interface. Supporting evidence is encouraged to lie near the corresponding prototype,
\begin{equation*}
\mathcal{L}_{+}
=
\sum_{e\in\mathcal{E}(D)}
\sum_{c\in\mathcal{C}}
\omega^{+}_{D}(e,c)\,d_{c,D}^{\,2}(z_e),
\end{equation*}
whereas opposing evidence incurs a penalty whenever it lies within a prescribed squared-distance margin around the opposed identity realization,
\begin{equation*}
\mathcal{L}_{-}
=
\sum_{e\in\mathcal{E}(D)}
\sum_{c\in\mathcal{C}}
\omega^{-}_{D}(e,c)\,
\bigl[\xi_{-}-d_{c,D}^{\,2}(z_e)\bigr]_{+}^{2},
\quad
\xi_{-}\in\mathbb{R}_{>0},
\end{equation*}
where $[x]_{+}=\max\{x,0\}$. Both terms are nonnegative: supporting evidence creates attraction toward the corresponding realization, while opposing evidence creates separation from inappropriate realizations. For a training episode, let $\mathcal{T}_{\mathrm{qry}}$ denote its held-out evidence objects paired with target identities $(e,y)$, where $y\in\mathcal C$. Following the episodic classification objective of Prototypical Networks
\citep[Section~2.2]{snell2017prototypical}, define
\begin{equation*}
\mathcal{L}_{\mathrm{epi}}
\Def
-\sum_{(e,y)\in \mathcal{T}_{\mathrm{qry}}}
\log [\rho_D(e)]_y.
\end{equation*}
The complete training objective may then be written
\begin{equation*}
\mathcal{L}
=
\mathcal{L}_{\mathrm{epi}}
+
\lambda_{\mathrm{soft}}\mathcal{L}_{\mathrm{soft}}
+
\lambda_{+}\mathcal{L}_{+}
+
\lambda_{-}\mathcal{L}_{-}
+
\lambda_{\mathrm{unc}}\mathcal{L}_{\mathrm{unc}}
+
\lambda_{\mathrm{reg}}\mathcal{R},
\end{equation*}
with all coefficients nonnegative. Here $\mathcal L_{\mathrm{soft}}$
preserves consistency with soft or weak targets, while $\mathcal L_{\mathrm{unc}}$ and $\mathcal R$ collect realization-specific calibration and geometric regularization terms. The framework consumes application-specific evidence through $\omega_D^\pm$; their upstream construction remains separate from prototype aggregation. Having specified the learned geometry and the objective that shapes it, we can now characterize when its induced hard identity assignment is locally stable.

\paragraph{Local identity stability.}
For a fixed prototype bank, write
$h_D(z)\Def T(\operatorname*{arg\,min}_{c\in\mathcal C}d_{c,D}(z))$,
so that $\kappa_D(e)=h_D(z_e)$.
The following margin certificate specializes the standard margin--Lipschitz
argument to our prototype realization \citep{tsuzuku2018lipschitz}.

\begin{proposition}[Local assignment stability]
\label{prop:local-assignment-stability}
Fix $D$ and its prototype bank, and suppose $|\mathcal C|\geq2$.
Let $e\in\mathcal E(D)$ have unique nearest identity $c_*=h_D(z_e)$.
Set $L_{c,D}\Def\|\Lambda_{c,D}^{1/2}\|_{\mathrm{op}}$, where
$\|\cdot\|_{\mathrm{op}}$ is the Euclidean operator norm, and define
\begin{equation}
 r_D(e)\Def\min_{c\ne c_*}
 \frac{d_{c,D}(z_e)-d_{c_*,D}(z_e)}{L_{c,D}+L_{c_*,D}}>0.
 \label{eq:assignment-stability-radius}
\end{equation}
For every $\delta\in\mathbb R^p$ with $\|\delta\|_2<r_D(e)$,
$h_D(z_e+\delta)=c_*$. Consequently, any $e'\in\mathcal E(D)$ with
$\|z_{e'}-z_e\|_2<r_D(e)$ belongs to $Q_{c_*,D}$.
\end{proposition}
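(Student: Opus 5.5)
The argument is a margin--Lipschitz comparison carried out identity by identity. Write $d_{c,D}(z)=\|\Lambda_{c,D}^{1/2}(z-\mu_{c,D})\|_2$, which is legitimate because each $\Lambda_{c,D}$ is positive definite (shown in Section~\ref{sec:prototypes}), so its symmetric square root exists. Each $d_{c,D}$ is then a Euclidean norm composed with an affine map. By the reverse triangle inequality it is $L_{c,D}$-Lipschitz:
\[
 |d_{c,D}(z+\delta)-d_{c,D}(z)|\le \|\Lambda_{c,D}^{1/2}\delta\|_2\le L_{c,D}\|\delta\|_2 .
\]

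The first step is to check that $r_D(e)>0$. Uniqueness of the nearest identity $c_*$ gives $d_{c,D}(z_e)>d_{c_*,D}(z_e)$ for every $c\ne c_*$. The set $\mathcal C$ is finite and nonempty with $|\mathcal C|\ge 2$, so the minimum ranges over a finite nonempty set of positive numbers. The denominators are positive because $L_{c,D}>0$.

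Next, fix $\delta$ with $\|\delta\|_2<r_D(e)$ and any $c\ne c_*$. Apply the Lipschitz bound to both distances:
\[
 d_{c,D}(z_e+\delta)-d_{c_*,D}(z_e+\delta)\ \ge\ \bigl(d_{c,D}(z_e)-d_{c_*,D}(z_e)\bigr)-(L_{c,D}+L_{c_*,D})\|\delta\|_2 .
\]
By the definition of $r_D(e)$ in \eqref{eq:assignment-stability-radius}, the right-hand side is at least $(L_{c,D}+L_{c_*,D})(r_D(e)-\|\delta\|_2)>0$. Hence $c_*$ is the unique minimizer of $c\mapsto d_{c,D}(z_e+\delta)$. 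The argmin is then the singleton $\{c_*\}$, and the selector must return $T(\{c_*\})=c_*$, so $h_D(z_e+\delta)=c_*$.

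For the consequence, take $e'$ with $\|z_{e'}-z_e\|_2<r_D(e)$ and set $\delta=z_{e'}-z_e$. The previous step gives $h_D(z_{e'})=c_*$. It remains to use $\kappa_D(e')=h_D(z_{e'})$. This identity holds because $[\rho_D(e)]_c$ is a strictly decreasing function of $d_{c,D}(z_e)$, since $\exp(-\cdot/\tau)$ is strictly decreasing and the normalizer is shared across $c$. So the argmax of $\rho_D$ and the argmin of the distances are the same set, and $T$ is applied to that same set. Therefore $e'\in\kappa_D^{-1}(\{c_*\})=Q_{c_*,D}$.

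No step is a real obstacle. The only points needing care are the following:
\begin{itemize}
 \item Strictness must be kept throughout, so that the tie-breaking selector $T$ never becomes relevant.
 \item The equality $\kappa_D=h_D\circ z$ has to be justified via the monotonicity of the softmax in the distances.
 \item The Lipschitz constant must be the operator norm of $\Lambda^{1/2}$, not of $\Lambda$.
\end{itemize}
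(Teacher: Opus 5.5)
Your proposal is correct and follows the paper's proof essentially step for step. Both arguments establish the $L_{c,D}$-Lipschitz bound on $d_{c,D}$ via the reverse triangle inequality with $\Lambda_{c,D}^{1/2}$, apply it to the winner and to each competitor to preserve a strictly positive margin, and then substitute $\delta=z_{e'}-z_e$; your justification of $\kappa_D=h_D\circ z_D$ by monotonicity of the distance softmax is the same one the paper gives in its prototype-decision-cells paragraph.
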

Appendix~\ref{app:assignment-stability} proves the result and
Appendix~\ref{app:signed-evidence-stability} relates its margin to the
supporting and opposing losses. The certificate concerns perturbations of
the embedding with the bank fixed; arbitrary schema changes or rebuilt
prototypes require additional control.

\subsection{Identity-Factorized Routing and Agentic Composition}
\label{sec:query-routing}

The evidence-side representation tells us how the current dataset realizes
the persistent identities; routing must determine which of those identities
a request requires. We therefore represent each query independently in a
second prototype space and then map its query-side memberships into
activations over $\mathcal C$. Let $\mathcal Q$ denote the space of admissible natural-language requests, and let $(\mathcal Z_Q,d_Q)$ be a learned query metric space, with $d_Q:\mathcal Z_Q\times\mathcal Z_Q\longrightarrow\mathbb R_{\geq 0}.$ A learned query encoder $g_\phi:\mathcal Q\longrightarrow
\mathcal Z_Q$ maps a request $q$ to $u_q\Def g_\phi(q)$. Let $\mathcal K$ index a finite, nonempty set of learned query prototypes
$\{\eta_k\}_{k\in\mathcal K}\subseteq\mathcal Z_Q$, where each prototype
represents a recurring evidential requirement. For $\tau_Q>0$, define
\[
 [\beta(q)]_k
 \Def
 \frac{\exp(-d_Q(u_q,\eta_k)/\tau_Q)}
 {\sum_{\ell\in\mathcal K}\exp(-d_Q(u_q,\eta_\ell)/\tau_Q)}.
\]
Thus $\beta:\mathcal Q\longrightarrow\Delta(\mathcal K)$ assigns each request a soft profile over recurring query requirements. The query profile must next be expressed in the identity coordinates used by the evidence-side representation. Let $M\in\mathbb R^{|\mathcal K|\times|\mathcal C|}$ be a learned compatibility matrix, where $M_{kc}$ represents the learned compatibility between query prototype $k$ and persistent identity $c$. Define $\Gamma_M:\Delta(\mathcal K)
\longrightarrow\Delta(\mathcal C),$ where $\Gamma_M(b)\Def\operatorname{softmax}(M^\top b),$ and let
\begin{equation}
    \gamma(q)
    \Def
    \Gamma_M(\beta(q))
    =
    \operatorname{softmax}\!\left(M^\top\beta(q)\right).
    \label{eq:query-identity-map}
\end{equation}
Thus $\gamma:\mathcal Q\longrightarrow\Delta(\mathcal C)$ assigns each request an activation profile over persistent identities. The coordinates of $\rho_D(e)$ and $\gamma(q)$ share the index set $\mathcal C$, but they have different meanings: $\rho_D(e)$ represents evidence membership, whereas $\gamma(q)$ represents query demand. To reuse the learned dataset representation across requests, let $\mathsf S_D$ denote the finite, source-preserving evidence state produced for dataset $D$. Its records retain the learned embeddings and memberships together with supporting and opposing evidence, prototype diagnostics, and provenance required for later retrieval.
A query should expose only a bounded subset of this state. Let $\operatorname{cost}:2^{\mathsf S_D}\longrightarrow\mathbb R_{\geq 0}$ measure the context cost of a selected evidence view, and for a budget
$B>0$ define
\[
    \mathfrak V_B(D)
    \Def
    \left\{
        V\subseteq\mathsf S_D:
        \operatorname{cost}(V)\leq B
    \right\}.
\]
We model query-conditioned materialization by $\mathsf C_B:2^{\mathsf S_D}\times\Delta(\mathcal C)\longrightarrow\mathfrak V_B(D),$ and, for fixed $D$, define
\[
    \mathsf F_{D,B}:
    \Delta(\mathcal C)
    \longrightarrow
    \mathfrak V_B(D),
    \qquad
    \mathsf F_{D,B}(\pi)
    \Def
    \mathsf C_B(\mathsf S_D,\pi).
\]
The bounded evidence view presented for request $q$ is therefore
\begin{equation}
    \mathcal V_B(D,q)
    \Def
    \mathsf F_{D,B}(\gamma(q))
    =
    \bigl(
        \mathsf F_{D,B}\circ\Gamma_M\circ\beta
    \bigr)(q),
    \quad
    \mathcal V_B(D,q)\in\mathfrak V_B(D).
    \label{eq:identity-factorized-routing}
\end{equation}
The materialization retains query-relevant supporting and opposing evidence,
competing realizations, and provenance; source references permit scoped
recovery of evidence omitted by the budget.

\paragraph{What the factorization preserves.}
Fix the identity inventory and query-side model, with no additional
dataset-dependent inputs. Then $\gamma(q)$ depends only on the request:
changing the dataset replaces $\mathsf F_{D,B}$ but leaves the query
activation profile unchanged. For fixed $D$, budget $B$, and deterministic
materialization,
\[
    \beta(q)=\beta(q')
    \;\Longrightarrow\;
    \gamma(q)=\gamma(q')
    \;\Longrightarrow\;
    \mathcal V_B(D,q)=\mathcal V_B(D,q').
\]
Thus the factorization separates reusable query requirements from their
dataset-specific evidence realization. It does not assert that evidence views
are identical across datasets, that the requested evidence exists, or that the
view determines the final answer. Appendix~\ref{app:identity-factorized-routing}
gives the corresponding formal statement and explains why a shared hard
query-prototype label alone is insufficient. The bounded evidence view is then interpreted against the original request by a single LLM Arbiter. Let $\mathsf R_{\mathrm{Arb}}$ denote the space of typed, source-preserving Arbiter records. Define
\[
    \mathcal A_B(D,q)
    \Def
    \mathsf{Arb}\!\left(q,\mathcal V_B(D,q)\right)
    =
    (a_1,\ldots,a_{m_B}),
    \qquad
    a_j\in\mathsf R_{\mathrm{Arb}},
\]
where $m_B<\infty$. Each record preserves its source reference and specifies
evidence or execution requirements for the downstream agent. The Arbiter
operates only on the bounded view; it does not search the full static state
and therefore does not replace the learned routing stage. Let $\bigodot$ denote source-preserving composition of these records, with each component remaining individually accessible. The final answer is
\[
    \widehat y
    =
    \mathsf{Agent}\!\left(
        q,\,
        \bigodot_{j=1}^{m_B} a_j
        \mathsf{DuckDB}(D),\,
        \mathsf{Calc}
    \right).
\]
The agent uses the routed records to retrieve required values, execute
computations, and verify its response. Because both the Arbiter and the final
agent retain the original request $q$, identical evidence views need not
produce identical answers for different queries.

\section{Experiments}
\label{sec:experiments}

\subsection{Experimental Setup}

\paragraph{Benchmark and evaluation protocol.}
We evaluate on the Data Agent Benchmark (DAB), an end-to-end benchmark for data agents spanning 54 queries across 12 datasets, nine application domains, and four database systems \cite{ma2026dab}. DAB includes heterogeneous structured-data tasks requiring operations such as multi-table reasoning,
irregular join resolution, aggregation, temporal reasoning, and
unstructured-text interpretation. One complete trial consists of one independent attempt at each of the 54 benchmark queries. We evaluate each primary Permute system over \(n=5\) complete trials, yielding \(54\times5=270\) query attempts per system. Each query is evaluated by a query-specific validator. Let $\mathcal D$ denote the set of benchmark datasets, let
$\mathcal Q_d\subseteq\mathcal Q$ denote the benchmark queries associated with dataset $d\in\mathcal D$, and let $r_{dqi}\in\{0,1\}$ indicate whether trial $i$ succeeds on query $q\in\mathcal Q_d$ in dataset $d$. The empirical success rate of a query is $\widehat{p}_{dq}=\frac{1}{n}\sum_{i=1}^{n} r_{dqi}.$ DAB's primary metric is dataset-macro stratified Pass@1,
\[
    \operatorname{Pass@1}_{\mathrm{strat}}
    =
    \frac{1}{|\mathcal D|}
    \sum_{d\in\mathcal D}
    \frac{1}{|\mathcal Q_d|}
    \sum_{q\in\mathcal Q_d}
    \widehat p_{dq}.
\]
Thus, queries are first averaged within each dataset and each dataset receives equal weight. The small markers in Figure~\ref{fig:dab-results} show the corresponding
dataset-macro score for each complete trial, while the large markers show system means. We additionally report raw query-attempt accuracy,
\[
    A^{\mathrm{micro}}
    =
    \frac{
        \displaystyle
        \sum_{d\in\mathcal D}
        \sum_{q\in\mathcal Q_d}
        \sum_{i=1}^{n} r_{dqi}
    }{
        \displaystyle
        n\sum_{d\in\mathcal D}|\mathcal Q_d|
    }.
\]
which weights each of the \(270\) query attempts equally rather than weighting datasets equally.

\paragraph{Systems and ablations.}
We compare three primary systems: the DAB Claude Opus~4.6 reference agent
using the standard benchmark scaffold; Core, using the agentic harness
with an Opus~5 reasoner but without the proposed learned equivalence
representations; and our architecture, which adds the prototype
realizations and query-conditioned evidence construction of
Section~\ref{sec:method}. While DAB's Claude Opus~4.6 reference agent utilizes a weaker model, it serves as a baseline to highlight the stark differences performance can make, even when using a model with a relevant level of capability. The Core comparison retains the stronger agentic setting while omitting the learned representation. Furthermore, the broader DAB leaderboard provides additional system-level context for these comparisons.

\subsection{Results and Ablations}

\paragraph{Overall DAB performance.}
The DAB Claude Opus~4.6 reference agent obtains \(55.51\%\) dataset-macro
stratified Pass@1, while Core reaches \(84.13\%\). Our architecture reaches
\textbf{\(94.67\%\)}, improving on Core by \textbf{\(10.54\)} percentage
points and on the reference agent by \textbf{\(39.16\)} points. At the raw
query-attempt level, our architecture succeeds on \textbf{\(258/270\)} attempts (\textbf{\(95.56\%\)}), compared with \(236/270\) (\(87.41\%\)) for Core,
a net gain of \(22\) successes.
\begin{figure}[H]
\centering
\includegraphics[width=0.485\linewidth]{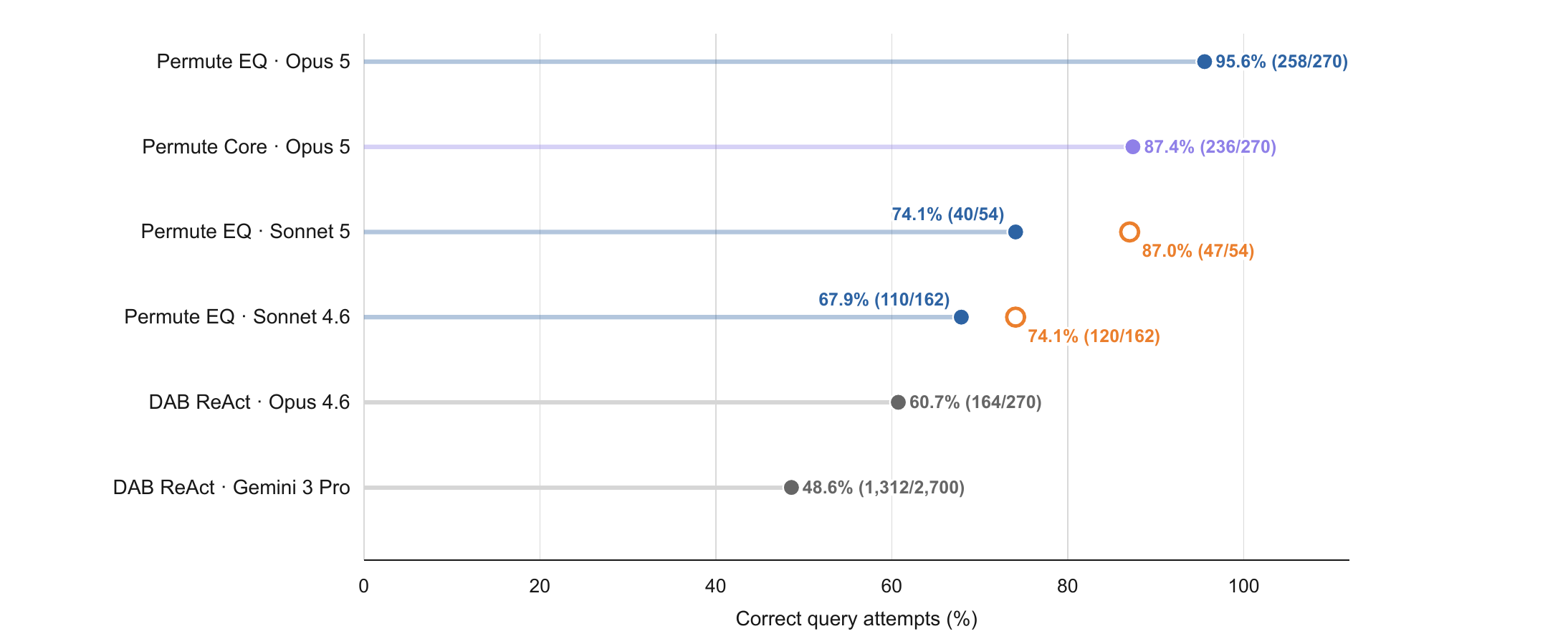}
\hfill
\includegraphics[width=0.485\linewidth]{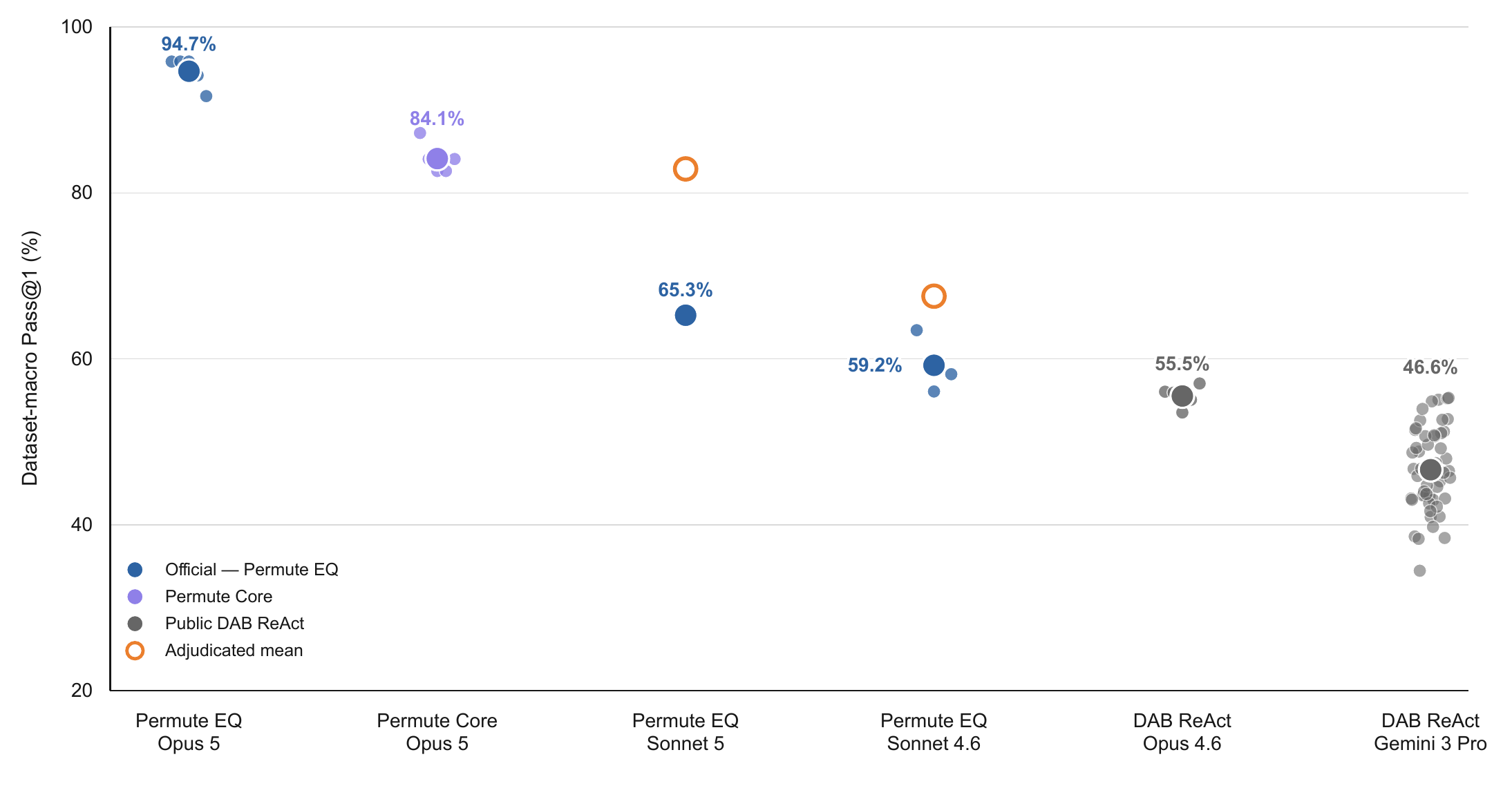}
\caption{DAB performance by raw query-attempt accuracy (left) and
dataset-macro stratified Pass@1 (right). Our proposed architecture is labeled
\textsc{Permute EQ} in the benchmark submission and figures, while
\textsc{Permute Core} denotes the corresponding implementation using the
conventional data-agent approach without the learned equivalence
representation and identity-factorized routing.}
\label{fig:dab-results}
\end{figure}
Relative to Core, the gain is therefore obtained on top of an already strong
agentic system; the distinguishing contribution is the learned
identity-based representation and query-conditioned evidence routing upstream
of the downstream reasoner. Comprehensive breakdowns of the query attempts and some additional relevant tracking metrics are detailed in the accompanying appendices.

\section{Conclusion}
\label{sec:conclusion}

Our results indicate that data-agent performance depends materially on how
evidence is organized before it reaches the language model. In heterogeneous
data environments, additional search and agentic decomposition can expand
exploration without actually resolving the underlying problem of which evidence should be compared, preserved, or exposed to the reasoner. Latent equivalence
learning moves this organization upstream by representing evidence through
persistent task-relevant identities and materializing a bounded,
query-conditioned view of their supporting, opposing, competing, and
provenance-bearing evidence.

Taken together, these results support the practical value of separating
reusable query requirements from dataset-specific evidence realization.
DAB stresses this separation across heterogeneous schemas and query forms,
including join resolution, aggregation, temporal reasoning, and
unstructured-text interpretation, so the observed improvement is not confined
to a single operation or data layout. The formal analysis gives a
complementary view of the same behavior: under the stated query-side
assumptions, the requirements induced by a request can remain fixed while
their materialized evidence changes with the dataset, while within a fixed
realization the hard evidence assignments are stable inside an explicit local
margin. These are distinct forms of controlled reuse---across environments on
the routing side and under perturbation on the representation side---and
together explain how learned structure can be reused without requiring the
downstream agent to reconstruct it for every request. The same analysis also
makes clear where that claim stops.

That scope is finite. The framework assumes that a bounded inventory of
persistent identities provides meaningful coverage of the deployment domain
and that the available evidence is sufficient to realize and distinguish
them. Requests that depend on identities outside this inventory,
unsupported relationships, or distinctions absent from the evidence may
therefore fall outside the modeled structure. Likewise, the local stability
certificate is conditional on a fixed prototype bank; substantial changes to
the learned realization require separate control. Performance consequently
remains limited by identity coverage, evidence quality, routing, and the
materialization budget.

Within these bounds, the broader implication is that evidence organization
need not remain an implicit burden of the language model. Across the
heterogeneous DAB tasks studied here, our results show that learning reusable
structure upstream can improve data-agent performance while preserving a
clear separation between representation and query-specific reasoning. This
supports a broader design principle for structured-data agents: learn the
organization that can be reused, and reserve language-model reasoning for the
semantic interpretation and execution that remain specific to the request.


\subsection{AI use statement}

AI tools, specifically ChatGPT 5.6 SOL Pro, were used to assist with the following:
\begin{itemize}
    \item drafting and refining prose across sections;
    \item improving the formalization, consistency, and canonicalization of notation;
    \item assisting with code development, debugging, and ablation design; and
    \item assisting with the presentation of graphs, figures, and diagrams.
\end{itemize}
All mathematical proofs, implementation details, algorithmic constructions, code design, diagram design, and architectural decisions are the authors' own, although AI did help confirm any core questions surrounding some of these topics (in a limited scope). For any figures or images produced with AI assistance, the underlying content, structure, and intended presentation were designed by the authors prior to using AI; AI was used only to help render or refine those designs in a visually polished form. Similarly, AI was used extensively for proofreading and stylistic refinement of the manuscript, after which the authors independently reviewed, revised, and incorporated the resulting prose into the paper in their own style.


\subsection{Reproducibility statement}
\label{sec:reproducibility}

We provide the architectural definitions, training objectives, and inference procedure necessary to characterize the proposed method in the main text, with complete mathematical proofs and additional derivations provided in the appendix. The experimental section specifies the benchmark protocol, evaluation metrics, comparison systems, and repeated-trial methodology used to obtain the reported results. Certain implementation details, including hyperparameter configurations, are not disclosed, and the source code is not released because the system is currently proprietary.

All Data Agent Benchmark (DAB) evaluations strictly followed the official benchmark rules (cf. \href{https://ucbepic.github.io/DataAgentBench/}
{Data Agent Benchmark}; \citealp{ma2026dab}), and the reported results were externally validated. The validation protocol prohibited access to gold-standard answers and prohibited benchmark-specific information or leading cues from being incorporated into the system inputs or prompts. Consequently, the system was required to recover the information necessary to answer each query through its learned representations, data-agent execution, and subsequent arbiter reasoning rather than through access to benchmark answers or benchmark-specific prompt tuning.

\paragraph{Compute resources.}

All training and evaluation were conducted locally on two consumer-grade workstations; no external compute clusters or cloud-based training infrastructure were used. The first workstation was equipped with an NVIDIA GeForce RTX 3070 Ti with 8\,GB of GPU memory, an Intel Core i9-9900K CPU, and 32\,GB of system memory. The second workstation was equipped with an NVIDIA GeForce RTX 4060 Ti with 16\,GB of GPU memory, an Intel Core i9-9900K CPU, and 128\,GB of system memory. Both systems had at least 1\,TB of local storage available throughout the experiments.



\bibliographystyle{iclr2027_conference}
\bibliography{./references}


\appendix

\section{Additional Graphs}

\subsection{Per-Dataset Performance}
\label{app:dab-dataset-performance}

Figure~\ref{fig:dab-dataset-accuracy} disaggregates the aggregate DAB
results in Section~\ref{sec:experiments} by dataset. Each cell reports
dataset-level Pass@1 for the corresponding system configuration, making
visible where aggregate differences arise rather than treating the benchmark
as a single homogeneous task family.

\begin{figure}[H]
    \centering
    \includegraphics[width=0.82\linewidth]{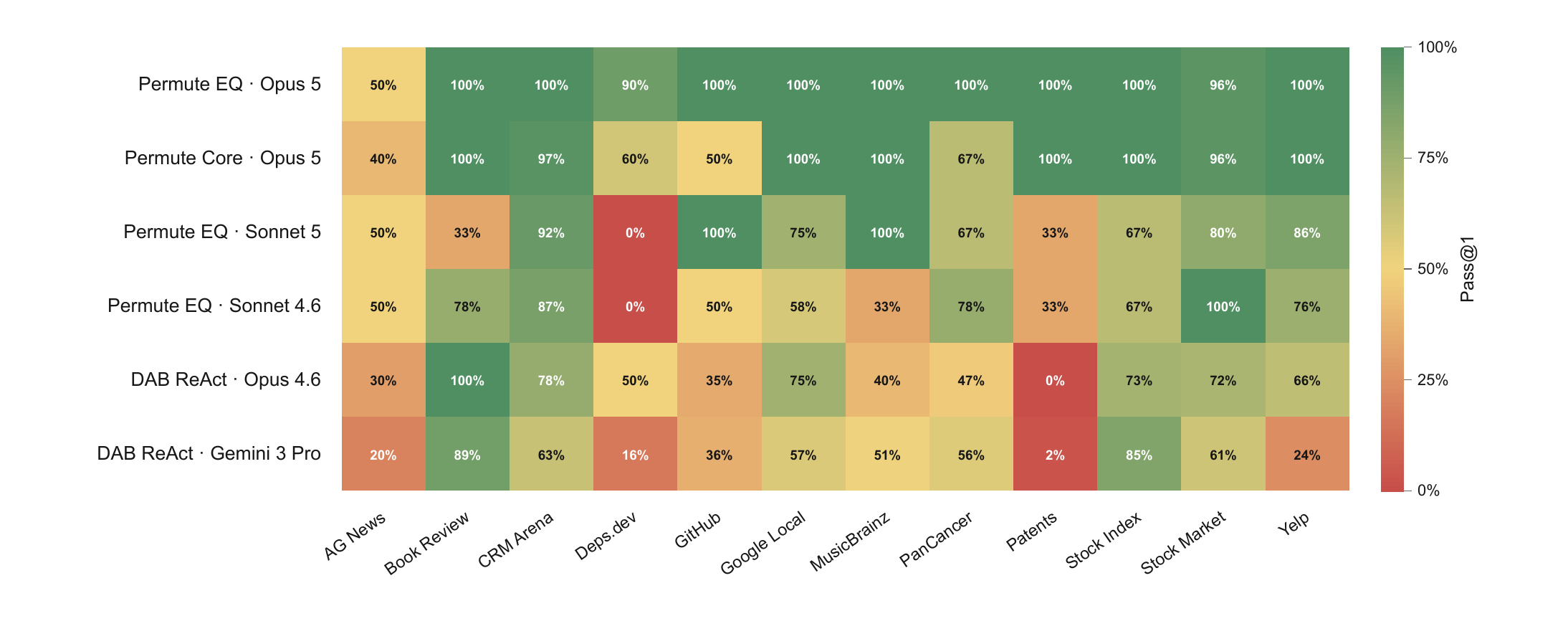}
    \caption{\textbf{Per-dataset DAB Pass@1 across evaluated configurations.}
    Each cell reports the Pass@1 percentage for one of the 12 benchmark
    datasets. The full Opus~5 realization attains perfect Pass@1 on nine
    datasets and exceeds the corresponding Core configuration on AG News,
    CRM Arena, Deps.dev, GitHub, and PanCancer, while matching it on the
    remaining seven datasets.}
    \label{fig:dab-dataset-accuracy}
\end{figure}

The largest gaps relative to Core occur on GitHub ($50\%\!\rightarrow\!100\%$), PanCancer ($67\%\!\rightarrow\!100\%$), and Deps.dev ($60\%\!\rightarrow\!90\%$). These differences are descriptive; the
aggregate evaluation remains the primary comparison reported in the main text. 

\subsection{Cost, resources, and consistency}
\label{app:dab-cost-resources-consistency}

The following diagnostics characterize the operating point used in our
benchmark runs rather than a token-optimized deployment. Permute EQ was
configured for conservative finalization: after locating a candidate answer,
the system could continue to retrieve corroborating evidence, challenge
competing interpretations, recompute intermediate results, and pass the
answer through explicit arbitration and criticism before returning it. In
several manually inspected traces, the eventual correct answer appeared
within approximately \(3\)--\(5\) tool calls, after which the system used on
the order of \(15\) additional calls to verify the result. This trace
observation is qualitative rather than a benchmark-wide average, but it
illustrates that a substantial portion of the observed runtime can arise
after answer discovery.

We retained these guardrails deliberately. Our objective in this evaluation
was to measure the complete system under an accuracy- and safety-oriented
verification policy, not to minimize token use through aggressive early
stopping. We report the resulting resource consumption because it exposes
the cost of that choice and identifies where further optimization is
possible. The current measurements should therefore be read as a
high-reliability operating point rather than as a claim of token optimality.

\paragraph{Quality and monetary cost.}
Figure~\ref{fig:dab-cost} compares dataset-macro quality with the mean cost
of one complete trial. The horizontal axis shows mean cost per complete trial, and movement toward the upper-left corresponds to greater capability at lower cost.
Across the EQ configurations, the figure shows the practical effect of
downstream reasoner choice. Sonnet~5 reaches \(82.9\%\) stratified Pass@1
at a mean cost of \(\$41.10\), nearly matching the \(84.1\%\) Core result
while remaining below Core's reported mean cost of approximately \(\$56\).
Permute EQ with Opus~5 reaches the highest measured quality,
\(94.7\%\), at a higher mean cost of \(\$82.82\).

Only the Core-versus-EQ Opus~5 comparison holds the downstream reasoner and
agentic setting fixed. Comparisons among the Opus and Sonnet configurations
instead illustrate different quality--cost operating points. The reported
points have not been jointly optimized over reasoner choice, verification
depth, stopping policy, or token budget, and Core's cost remains approximate
because its accepted artifacts omit the corresponding cost fields.

\begin{figure}[H]
  \centering
  \includegraphics[width=0.6\linewidth]{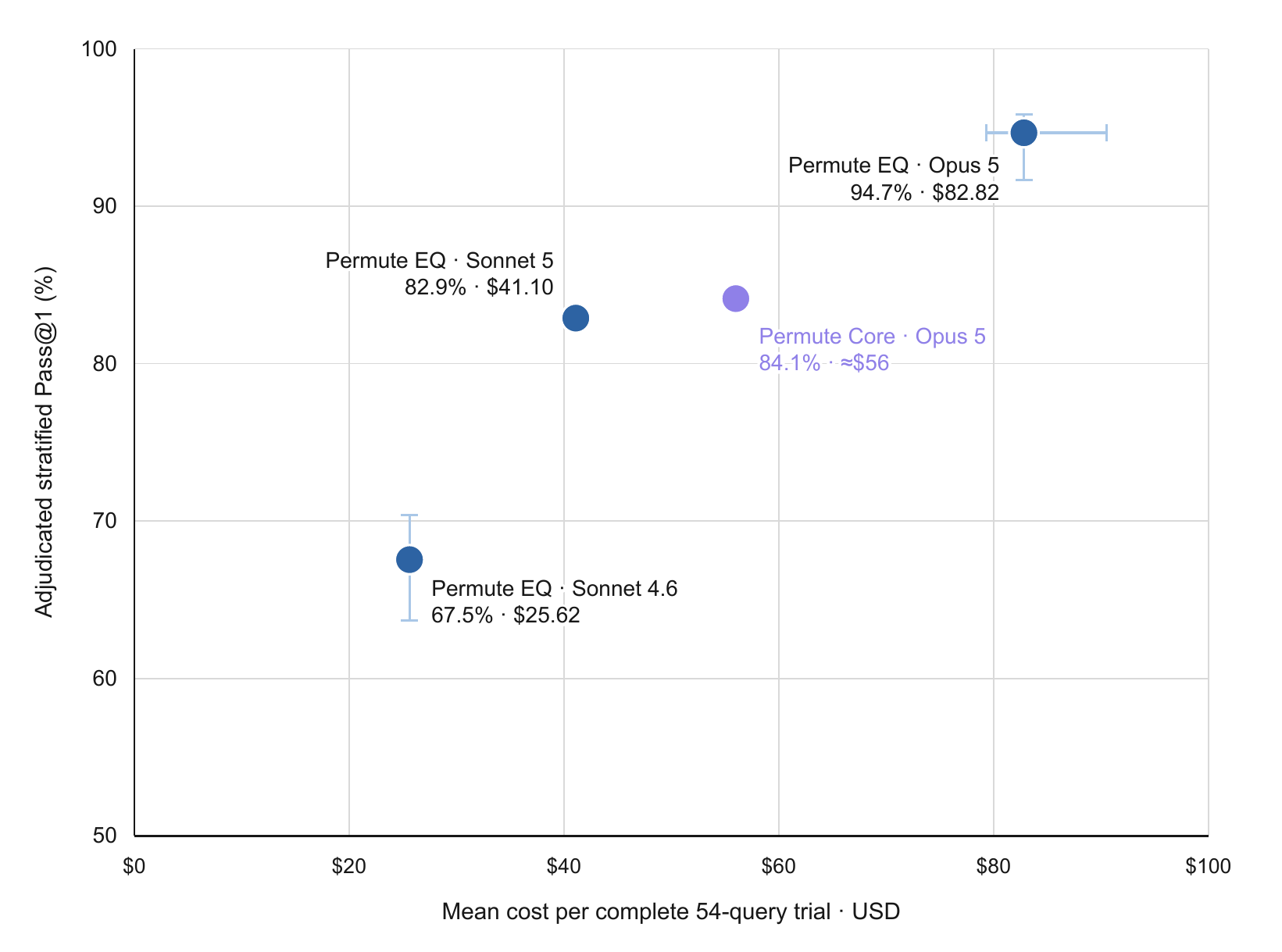}
  \caption{
    Dataset-macro DAB quality versus mean cost per complete 54-query trial
    for the evaluated configurations. Permute EQ points use adjudicated
    quality and measured scorecard cost. Core's accepted artifacts omit cost
    fields, so its point uses the reported mean of approximately \(\$56\)
    per run for the final max-reasoning configuration. The displayed points
    were not optimized jointly for token efficiency and verification depth.
  }
  \label{fig:dab-cost}
\end{figure}

\paragraph{Cumulative token and tool usage.}
Figure~\ref{fig:dab-resources} reports mean cumulative token and tool-call
usage over a complete 54-query trial. These totals include not only the
calls used to locate and compute an answer, but also subsequent
corroboration, contradiction checks, recomputation, arbitration, and critic
validation. They therefore measure the full conservative execution policy
used in the reported runs.

This distinction is also important for interpreting the bounded evidence
view. The bound applies to the evidence materialized for an individual
reasoning step; it does not bound the cumulative number of scoped retrieval,
execution, or verification steps performed across an entire trial. A system
may therefore maintain bounded prompt-facing evidence while still
accumulating substantial token and tool usage through repeated verification.
The larger EQ totals should be interpreted in this context rather than as
evidence that query conditioning failed to constrain the evidence exposed
at any one step.

\begin{figure}[H]
  \centering
  \includegraphics[width=0.5\linewidth]{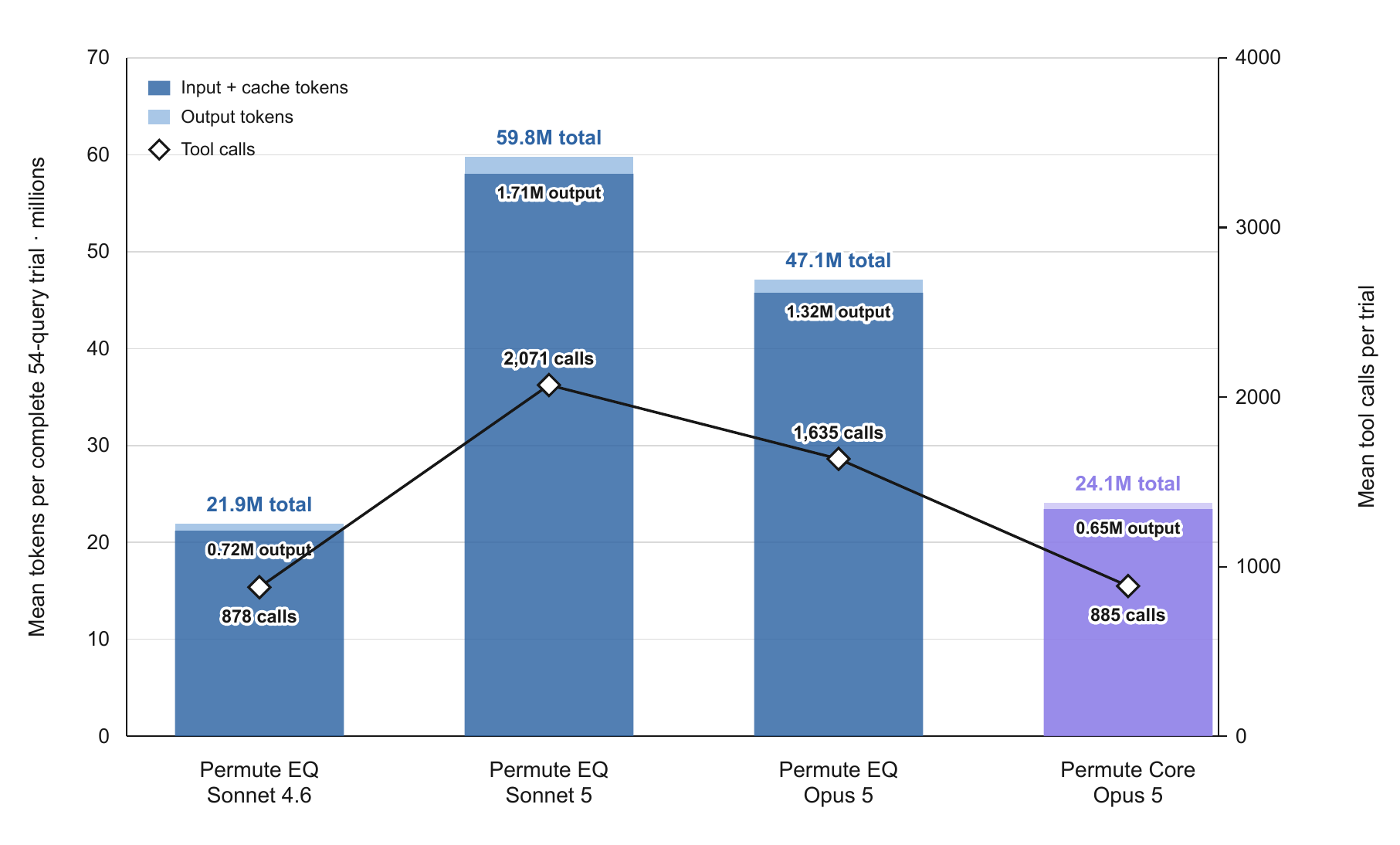}
  \caption{
    Mean token and tool-call usage per complete 54-query trial under the
    evaluated accuracy-first configurations. Each bar separates input and
    cache tokens from output tokens, while the line reports mean tool calls.
    The totals include conservative post-answer verification. Core's five
    reported usage summaries average \(23.455\) million input and cache
    tokens and \(0.655\) million output tokens, or \(24.110\) million total
    tokens.
  }
  \label{fig:dab-resources}
\end{figure}

Several of the current guardrails could in principle be reduced through confidence-dependent stopping, adaptive verification depth, fewer repeated challenge paths, or earlier termination once independent checks agree. Those alternatives were not evaluated here. The present results therefore identify an unoptimized point on a broader tradeoff among offline training
duration, downstream reasoner capability, online token and tool use, monetary cost, and final-answer reliability. We hypothesize that more careful tuning of this frontier could retain comparable benchmark performance while materially reducing online cost, but establishing such a Pareto-efficient configuration requires a dedicated ablation.

\begin{figure}[H]
  \centering
  \includegraphics[width=0.5\linewidth]{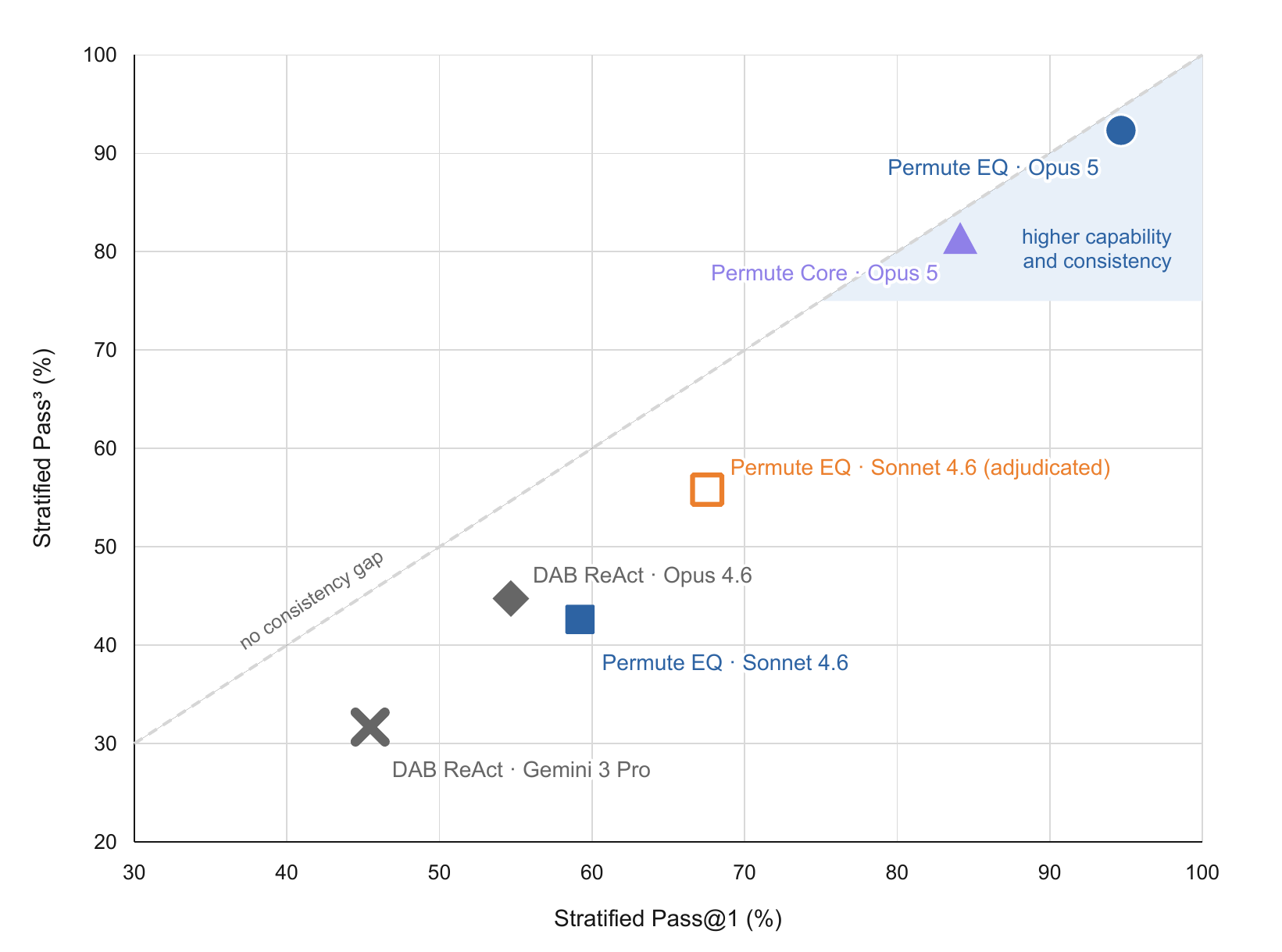}
  \caption{
    Dataset-macro capability versus repeated-trial consistency.
    Pass\(^{3}\) requires a query to pass in all three sampled trials and is
    distinct from Pass@3. The diagonal denotes no consistency gap; points
    closer to the upper-right combine stronger average performance with
    greater trial-to-trial stability. For the 50-run Gemini result, the
    displayed consistency score averages over every three-run subset.
  }
  \label{fig:dab-consistency}
\end{figure}

\paragraph{Capability and repeated-trial consistency.}
Figure~\ref{fig:dab-consistency} compares ordinary stratified Pass@1 with
Pass\(^{3}\). Pass\(^{3}\) is not Pass@3: a query contributes a success only
when it passes in all three sampled trials, after which the query-level
indicators are aggregated using the same dataset-macro structure as
Pass@1. The horizontal axis therefore measures average capability, while
the vertical axis measures how much of that capability is retained
consistently across repeated trials.

Since a query that passes all three trials must also pass an individual
trial, Pass\(^{3}\) cannot exceed Pass@1. The diagonal represents no
consistency gap, while vertical distance below it reflects run-to-run
fragility. Permute EQ with Opus~5 occupies the upper-right region and lies
close to the diagonal, combining the highest average capability with strong
three-run consistency. This behavior is consistent with the system's
conservative finalization policy, although the present experiments do not
isolate how much of the consistency is attributable to verification depth,
the learned equivalence representation, or their interaction. A
verification-depth ablation would be required to make that causal
attribution.

\section{Assignment Structure, Routing, and Stability}
\label{app:proofs}

We separate elementary properties of the assignment maps from the routing
factorization and conditional stability guarantees. Unless a change of
realization is explicitly considered, the trained parameters and the
prototype bank for $D$ are fixed. All weights and model outputs below are
finite. The statements characterize the disclosed mathematical formulation;
they do not assume access to the mechanisms constructing $\omega_D^\pm$.

\subsection{Assignment Fibers and Soft Membership}
\label{app:proof-assignment-induced-quotient}

\paragraph{Equivalence and quotient.}
Because $\mathcal C$ is finite and nonempty, the maximizer set of every
$\rho_D(e)$ is nonempty. The fixed selector $T$ makes $\kappa_D$ single-valued.
Reflexivity, symmetry, and transitivity of equality therefore imply the same
three properties for $e\sim_D e'\iff\kappa_D(e)=\kappa_D(e')$.
For every $e\in\mathcal E(D)$,
\[
 [e]_D
 =\{e':\kappa_D(e')=\kappa_D(e)\}
 =Q_{\kappa_D(e),D}.
\]
Every evidence object belongs to its own assignment fiber, and two fibers
intersect only if their identity labels agree. The nonempty fibers consequently
partition $\mathcal E(D)$. The map
$[e]_D\mapsto\kappa_D(e)$ is well-defined, injective, and surjective onto
$\kappa_D(\mathcal E(D))$, proving the stated quotient correspondence.
These facts hold for any single-valued assignment map; no novelty is claimed
for this construction.

\paragraph{Prototype decision cells.}
\label{app:geometric-realization-fibers}
Let $z_D(e)\Def z_e$ name the embedding component of
$\psi_\theta(\Phi_D(e))$. The prototype construction is well-defined:
$s^{\mathrm{sup}}_{c,D}\geq0$, its positive branch has a nonzero denominator,
and its zero-support branch uses the prior. The prior and support precision
matrices are positive definite, as are their convex combinations.
Consequently every $d_{c,D}(z)$ is finite and nonnegative. Since $\tau>0$,
maximizing the distance softmax is equivalent to minimizing $d_{c,D}$,
including all ties. Hence
\[
 \kappa_D=h_D\circ z_D,\qquad
 V_{c,D}\Def h_D^{-1}(\{c\}),\qquad
 Q_{c,D}=z_D^{-1}(V_{c,D}).
\]
The final equality follows because
$\kappa_D(e)=c\iff h_D(z_D(e))=c\iff z_D(e)\in V_{c,D}$.
Thus $P_{c,D}$ is a prototype realization, $V_{c,D}$ is a decision cell
obtained by comparing all prototypes, and $Q_{c,D}$ is its evidence-space
preimage. Neither the identity label nor an individual prototype is the
set-valued equivalence class.

\begin{proposition}[Membership pseudometric]
\label{prop:membership-pseudometric}
\label{app:proof-membership-pseudometric}
The function $d_D^{\mathrm{mem}}(e,e')=\|\rho_D(e)-\rho_D(e')\|_2$
is a pseudometric. Its zero-distance equivalence classes refine the
hard-assignment fibers, and the refinement can be strict.
\end{proposition}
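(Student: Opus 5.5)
The plan has three parts: the pseudometric axioms, the refinement of hard fibers, and an explicit example showing the refinement can be strict.

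\textbf{Pseudometric axioms.} These follow by pulling back the Euclidean norm on $\mathbb R^{|\mathcal C|}$ along the map $\rho_D:\mathcal E(D)\to\Delta(\mathcal C)\subseteq\mathbb R^{|\mathcal C|}$. Nonnegativity and $d_D^{\mathrm{mem}}(e,e)=0$ are immediate. Symmetry holds because $\|x-y\|_2=\|y-x\|_2$. The triangle inequality follows from the norm's triangle inequality applied to
\[
\rho_D(e)-\rho_D(e'')=\bigl(\rho_D(e)-\rho_D(e')\bigr)+\bigl(\rho_D(e')-\rho_D(e'')\bigr).
\]
Definiteness is not required. So $d_D^{\mathrm{mem}}$ is a pseudometric, and its zero set is the equivalence relation $\rho_D(e)=\rho_D(e')$.

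\textbf{Refinement.} If $d_D^{\mathrm{mem}}(e,e')=0$, then $\rho_D(e)=\rho_D(e')$. The two profiles therefore have the same maximizer set $A=\operatorname*{arg\,max}_c[\rho_D(e)]_c$. Since $T$ is a fixed deterministic selector, $\kappa_D(e)=T(A)=\kappa_D(e')$, so $e\sim_D e'$. Every zero-distance class is thus contained in a single fiber $Q_{c,D}$, which is the refinement claim. This step is routine. The only point needing care is that $\kappa_D$ depends on $e$ only through $\rho_D(e)$, which holds by the definition in \eqref{eq:assignment-fibers}.

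\textbf{Strictness.} I need two evidence objects with the same hard identity but different profiles. The main obstacle is realizing this inside the Gaussian construction rather than for an abstract $\rho_D$.

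I will take $\mathcal C=\{c_1,c_2\}$ and $p=1$. The prototypes are $\mu_{c_1,D}=0$ and $\mu_{c_2,D}=1$, with $\Lambda_{c,D}=I_1$. This is achievable with zero support, taking prior parameters $\mu^0_{c_1}=0$, $\mu^0_{c_2}=1$, $s^0_{c_1}=s^0_{c_2}=1$. I then take an encoder producing embeddings $z_e=0$ and $z_{e'}=0.2$.

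Both embeddings are strictly closer to $c_1$, so $\kappa_D(e)=\kappa_D(e')=c_1$ with no ties, and $T$ plays no role. The distance gaps differ:
\[
d_{c_2}-d_{c_1}=1 \text{ at } z_e, \qquad d_{c_2}-d_{c_1}=0.6 \text{ at } z_{e'}.
\]
Hence
\[
[\rho_D(e)]_{c_1}=\bigl(1+e^{-1/\tau}\bigr)^{-1} \neq \bigl(1+e^{-0.6/\tau}\bigr)^{-1}=[\rho_D(e')]_{c_1},
\]
because $x\mapsto(1+e^{-x/\tau})^{-1}$ is strictly increasing. This gives $d_D^{\mathrm{mem}}(e,e')>0$ while $e\sim_D e'$.

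If a more abstract witness is acceptable, two profiles such as $(0.9,0.1)$ and $(0.6,0.4)$ already show strictness. The concrete Gaussian instance confirms that the strict case actually occurs under the paper's realization.
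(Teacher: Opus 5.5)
Your proof is correct and follows the paper's argument essentially step for step. The pseudometric axioms come from the Euclidean norm, zero distance forces equal profiles and hence equal readouts under the fixed selector $T$, and strictness is shown by a two-identity example. The one difference is in that example: the paper uses only the abstract profiles $(3/4,1/4)$ and $(2/3,1/3)$, at distance $\sqrt{2}/12$, whereas your one-dimensional zero-support prototype instance also confirms that a strict case actually arises under the Gaussian distance-softmax realization.
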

\begin{proof}
Nonnegativity, zero self-distance, and symmetry follow from the Euclidean
norm. Its triangle inequality gives
\[
 \|\rho_D(e)-\rho_D(e'')\|_2
 \leq\|\rho_D(e)-\rho_D(e')\|_2+
       \|\rho_D(e')-\rho_D(e'')\|_2.
\]
Zero distance is equivalent to equality of the membership profiles, which
implies equal hard readouts under the same $T$. Conversely, in a two-identity
example, $(3/4,1/4)$ and $(2/3,1/3)$ have the same unique maximizer but differ
by Euclidean distance $\sqrt{2}/12$. Thus hard equivalence need not imply
zero membership distance. This also shows why the membership pseudometric
need not descend to a metric on the coarser hard quotient.
\end{proof}

\paragraph{Remaining well-definedness checks.}
\label{app:additional-well-posedness}
The finite softmax normalizers defining $\rho_D$, $\beta$, and $\gamma$
are strictly positive, so the resulting vectors belong to their respective
simplices. Each summand of $\mathcal L_+$ and $\mathcal L_-$ is nonnegative.
An opposing-evidence term vanishes when its squared distance reaches
$\xi_-$; it is strictly positive below that margin only when its weight
$\omega_D^-(e,c)$ is positive. None of these observations guarantees that
optimization attains every desired separation.

\subsection{Identity-Factorized Routing}
\label{app:identity-factorized-routing}

Fix the shared identity inventory, query projector $g_\phi$, query metric,
query prototypes, and compatibility matrix $M$. As in Section~\ref{sec:query-routing}, these query-side objects receive no
additional dataset-dependent input. For fixed $D,B$, take conditioning to
be deterministic, or condition on a fixed realization of its randomness.

\begin{proposition}[Identity-factorized evidence selection]
\label{prop:identity-factorized-routing}
With $\Gamma_M(b)=\operatorname{softmax}(M^\top b)$ and
$\mathsf F_{D,B}(\pi)=\mathsf C_B(\mathsf S_D,\pi)$,
\[
 \mathcal V_B(D,q)
 =\mathsf F_{D,B}\bigl(\Gamma_M(\beta(q))\bigr).
\]
For any $q,q'\in\mathcal Q$,
\[
 \beta(q)=\beta(q')
 \ \Longrightarrow\ \gamma(q)=\gamma(q')
 \ \Longrightarrow\ \mathcal V_B(D,q)=\mathcal V_B(D,q').
\]
For the same query and fixed query-side model, replacing $D$ by another
environment $D'$ over the same identity inventory leaves $\gamma(q)$
unchanged; only the dataset-specific materialization map is replaced.
\end{proposition}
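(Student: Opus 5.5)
The plan is to argue directly from the definitions in Section~\ref{sec:query-routing}. All three claims follow from unwinding compositions of well-defined functions, so I will treat the statement as three short steps.

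\textbf{Step 1: the factorization identity.} I will first check that each map in the chain is a genuine function with the stated codomain. By the well-definedness remarks in Appendix~\ref{app:additional-well-posedness}, the normalizer defining $\beta(q)$ is strictly positive, so $\beta(q)\in\Delta(\mathcal K)$. Because $M^\top b$ is a finite real vector, its softmax lies in $\Delta(\mathcal C)$, so $\Gamma_M$ maps $\Delta(\mathcal K)$ into $\Delta(\mathcal C)$. Since $D$ and $B$ are fixed and $\mathsf C_B$ is deterministic (or evaluated at a fixed realization of its randomness), $\mathsf F_{D,B}(\pi)=\mathsf C_B(\mathsf S_D,\pi)$ is single-valued with values in $\mathfrak V_B(D)$. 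Substituting the definition $\gamma(q)=\Gamma_M(\beta(q))$ from \eqref{eq:query-identity-map} into \eqref{eq:identity-factorized-routing} then gives the displayed identity verbatim.

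\textbf{Step 2: the implication chain.} I will use only the fact that functions respect equality of inputs. If $\beta(q)=\beta(q')$, applying the single-valued map $\Gamma_M$ gives $\gamma(q)=\gamma(q')$. Applying the single-valued map $\mathsf F_{D,B}$ then gives $\mathcal V_B(D,q)=\mathcal V_B(D,q')$. The only point needing care is that $\mathsf F_{D,B}$ receives no input other than $\pi$. In particular, $q$ does not enter $\mathsf C_B$ directly, and this holds precisely because materialization is defined on $2^{\mathsf S_D}\times\Delta(\mathcal C)$.

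\textbf{Step 3: dataset invariance of $\gamma$.} I will observe that every ingredient of $\gamma(q)$, namely $g_\phi$, $d_Q$, $\{\eta_k\}$, $\tau_Q$ and $M$, is fixed and receives no dataset-dependent input by hypothesis. Hence $\gamma(q)$ is a function of $q$ alone. Replacing $D$ by $D'$ over the same inventory $\mathcal C$ keeps the codomain $\Delta(\mathcal C)$ the same, so $\gamma(q)$ remains a valid input to $\mathsf F_{D',B}$. Then $\mathcal V_B(D',q)=\mathsf F_{D',B}(\gamma(q))$, and only the outer map has changed.

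The main obstacle is not computational. It is making the hypotheses precise enough that Step 2 and Step 3 are not circular. I would therefore state explicitly that $\mathsf C_B$ is a deterministic function whose only query-dependence is through $\pi$, and that the shared inventory $\mathcal C$ is what makes $\Delta(\mathcal C)$ a common interface across datasets. I would add a remark on why a shared hard label $\arg\max_k[\beta(q)]_k$ does not suffice: two queries can share the same hard label yet have different profiles $\beta(q)\neq\beta(q')$, hence generally different $\gamma$ and different views. A two-prototype instance analogous to the one in Proposition~\ref{prop:membership-pseudometric} shows this.
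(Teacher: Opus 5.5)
Your proposal is correct and matches the paper's proof. Both substitute $\gamma=\Gamma_M\circ\beta$ into the definition of $\mathcal V_B(D,q)$, use that the single-valued maps $\Gamma_M$ and the fixed $\mathsf F_{D,B}$ preserve equality of inputs, and observe that no query-side object depends on $D$; your extra well-definedness checks and hard-label remark correspond to the paper's well-posedness paragraph and its subsequent ``Why the query-side hard class is not enough'' discussion.
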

\begin{proof}
Substitution of $\gamma=\Gamma_M\circ\beta$ into the definition of
$\mathcal V_B$ gives the factorization. Equal $\beta$ profiles have equal
images under $\Gamma_M$; equal activation profiles then have equal images
under the fixed map $\mathsf F_{D,B}$. Neither $\beta(q)$ nor $M$ depends on
$D$ under the stated assumptions, so changing $\mathsf S_D$ affects
$\mathsf F_{D,B}$ but not $\gamma(q)$.
\end{proof}

This is an architectural separation property, not a statistical sufficiency
claim for the answer. The requested evidence may be absent in $D'$, the
materialized views may differ, and the Arbiter and agent still receive the
original query. If the query encoder, compatibility map, identity inventory,
or query-side inputs change with the dataset, the dataset-invariance
conclusion does not follow.

\paragraph{Why the query-side hard class is not enough.}
A deterministic hard readout of $\beta$ could itself define query classes,
but routing uses the full profile. For example, with
$\mathcal K=\mathcal C=\{1,2\}$ and $M=I_2$, the profiles
$b=(3/4,1/4)^\top$ and $b'=(2/3,1/3)^\top$ share their hard maximizer, while
$\Gamma_M(b)\ne\Gamma_M(b')$: their output odds are respectively
$\exp(1/2)$ and $\exp(1/3)$. Thus a routing map on hard query classes would
require an additional constancy condition that the present framework does
not impose. The compatibility matrix connects two soft representations;
it does not declare query requirements equivalent to evidence identities.

\subsection{Local Stability of Prototype Assignments}
\label{app:assignment-stability}

The proof applies the standard margin--Lipschitz certification principle
\citep{tsuzuku2018lipschitz} to the distances in
Section~\ref{sec:prototypes}. The square root
$\Lambda_{c,D}^{1/2}$ is the unique symmetric positive-definite square root;
$\|A\|_{\mathrm{op}}=\sup_{\|x\|_2=1}\|Ax\|_2$ is the Euclidean operator
norm. In particular,
$L_{c,D}=\sqrt{\lambda_{\max}(\Lambda_{c,D})}>0$.

\begin{proof}[Proof of Proposition~\ref{prop:local-assignment-stability}]
Write $A_{c,D}=\Lambda_{c,D}^{1/2}$, so that
$d_{c,D}(z)=\|A_{c,D}(z-\mu_{c,D})\|_2$.
For any $z,z'\in\mathbb R^p$, the reverse triangle inequality gives
\begin{align*}
 |d_{c,D}(z')-d_{c,D}(z)|
 &\leq\|A_{c,D}(z'-z)\|_2\\
 &\leq L_{c,D}\|z'-z\|_2.
\end{align*}
Fix $e$ and its unique winning identity $c_*$. For each competitor
$c\ne c_*$, put
$\Delta_{c,D}(e)=d_{c,D}(z_e)-d_{c_*,D}(z_e)>0$.
Applying the preceding inequality to the competitor and winner separately,
\begin{align*}
 d_{c,D}(z_e+\delta)-d_{c_*,D}(z_e+\delta)
 &\geq d_{c,D}(z_e)-L_{c,D}\|\delta\|_2\\
 &\quad-d_{c_*,D}(z_e)-L_{c_*,D}\|\delta\|_2\\
 &=\Delta_{c,D}(e)-(L_{c,D}+L_{c_*,D})\|\delta\|_2.
\end{align*}
The finite minimum defining $r_D(e)$ is positive. If
$\|\delta\|_2<r_D(e)$, the final expression is strictly positive for
\emph{every} competitor. Hence $c_*$ remains the unique minimizer and
$h_D(z_e+\delta)=c_*$. Taking $\delta=z_{e'}-z_e$ gives the statement about
membership in $Q_{c_*,D}$.
\end{proof}

\paragraph{A simpler conservative radius.}
Define the nearest-competitor margin and maximum distance sensitivity by
\[
 m_D(e)\Def\min_{c\ne c_*}\Delta_{c,D}(e),\qquad
 L_D^{\max}\Def\max_{c\in\mathcal C}L_{c,D}.
\]
Then $r_D(e)\geq m_D(e)/(2L_D^{\max})$ because every numerator is at least
$m_D(e)$ and every denominator is at most $2L_D^{\max}$.
A runner-up-specific sensitivity bound is not sufficient unless the other
competitors are also controlled. At a tie this argument provides no positive
radius. If $|\mathcal C|=1$, the hard assignment is constant everywhere.

\paragraph{What is certified.}
The result preserves the selected identity, not its semantic correctness,
and it does not assert equality of the full soft-membership vectors.
It applies to an embedding perturbation with the entire bank fixed,
including its means and precision matrices. If the embedding component
$f_\theta:\mathbb R^m\to\mathbb R^p$ of $\psi_\theta$ is
$K_\theta$-Lipschitz on a feature neighborhood, with $K_\theta>0$, the same
argument certifies feature perturbations that remain in that neighborhood
and have norm below $r_D(e)/K_\theta$.
This is conditional on that Lipschitz bound and does not establish
invariance to arbitrary raw-schema transformations.

\subsection{From Signed-Evidence Losses to a Stability Margin}
\label{app:signed-evidence-stability}

The current objective already supplies a conditional connection between
evidence quality and local stability. This connection uses its existing
losses, not a new loss or an additional training procedure. For a fixed
evidence object $e$, define its contributions to those losses by
\begin{align*}
 \ell_+(e)&\Def\sum_{c\in\mathcal C}
      \omega_D^+(e,c)d_{c,D}^2(z_e),\\
 \ell_-(e)&\Def\sum_{c\in\mathcal C}
      \omega_D^-(e,c)[\xi_- - d_{c,D}^2(z_e)]_+^2.
\end{align*}

\begin{proposition}[Conditional loss-to-margin bound]
\label{prop:signed-evidence-stability}
Suppose $|\mathcal C|\geq2$ and choose a candidate identity $c_0$ with
$w_+\Def\omega_D^+(e,c_0)>0$ and
$w_-\Def\min_{c\ne c_0}\omega_D^-(e,c)>0$.
Set
\[
 a_e\Def\sqrt{\ell_+(e)/w_+},\qquad
 b_e\Def\sqrt{\bigl[\xi_- -\sqrt{\ell_-(e)/w_-}\bigr]_+}.
\]
If $b_e>a_e$, then $c_0$ is the unique nearest identity and
\[
 r_D(e)\geq\frac{b_e-a_e}{2L_D^{\max}}>0.
\]
\end{proposition}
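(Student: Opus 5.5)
The plan is to turn each loss contribution into a one-sided bound on a prototype distance, and then feed the resulting gap into the simpler conservative radius derived after Proposition~\ref{prop:local-assignment-stability}.

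\emph{Winner distance.} Every summand of $\ell_+(e)$ is nonnegative, so discarding all terms except $c=c_0$ gives $w_+\,d_{c_0,D}^2(z_e)\le\ell_+(e)$. Hence $d_{c_0,D}(z_e)\le a_e$.

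\emph{Competitor distances.} Fix any $c\ne c_0$. Again using nonnegativity of the summands, together with $\omega_D^-(e,c)\ge w_-$,
\[
 w_-\,[\xi_- - d_{c,D}^2(z_e)]_+^2\le \ell_-(e).
\]
This yields $[\xi_- - d_{c,D}^2(z_e)]_+\le\sqrt{\ell_-(e)/w_-}$, and therefore $d_{c,D}^2(z_e)\ge \xi_- - \sqrt{\ell_-(e)/w_-}$.

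Since $d_{c,D}^2(z_e)$ is also nonnegative, we may take positive parts on the right-hand side: $d_{c,D}^2(z_e)\ge[\xi_- - \sqrt{\ell_-(e)/w_-}]_+ = b_e^2$. Taking square roots, which are monotone on $\mathbb R_{\ge0}$, gives $d_{c,D}(z_e)\ge b_e$ for every competitor $c\ne c_0$.

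\emph{Uniqueness and radius.} Assume $b_e>a_e$. Then $d_{c,D}(z_e)\ge b_e>a_e\ge d_{c_0,D}(z_e)$ for every $c\ne c_0$, so $c_0$ is the unique minimizer of $d_{\cdot,D}(z_e)$. By the prototype decision-cell identification, this means $c_0=c_*=h_D(z_e)$, and the hypotheses of Proposition~\ref{prop:local-assignment-stability} hold. Each margin then satisfies $\Delta_{c,D}(e)\ge b_e-a_e$, so $m_D(e)\ge b_e-a_e$. The conservative-radius bound $r_D(e)\ge m_D(e)/(2L_D^{\max})$ then gives the claim. Positivity follows from $b_e>a_e$ and $L_D^{\max}>0$.

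\emph{Main obstacle.} The only delicate step is inverting the squared hinge in the competitor bound. The positive part has to be handled with care: when $\xi_-\le\sqrt{\ell_-(e)/w_-}$ we get $b_e=0$, the hypothesis $b_e>a_e\ge0$ fails, and the statement is vacuous. The remaining points are checking that a minimum weight $w_->0$ controls every competitor uniformly, and that the selector $T$ plays no role because the minimizer is unique.
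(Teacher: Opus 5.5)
Your proposal is correct and follows the paper's proof essentially step for step. The $c_0$ summand of $\ell_+$ bounds the winner distance by $a_e$; the minimum opposing weight $w_-$, the inversion of the squared hinge, and nonnegativity of $d_{c,D}^2$ bound every competitor below by $b_e$; and the conservative radius $r_D(e)\geq m_D(e)/(2L_D^{\max})$ then concludes, with your explicit check that $c_0=c_*=h_D(z_e)$ (so that Proposition~\ref{prop:local-assignment-stability} applies) making precise a step the paper leaves implicit.
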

\begin{proof}
All loss summands are nonnegative. The summand for $c_0$ in $\ell_+$ gives
$w_+d_{c_0,D}^2(z_e)\leq\ell_+(e)$, hence $d_{c_0,D}(z_e)\leq a_e$.
For each competitor $c\ne c_0$, its opposing weight is at least $w_-$, so
\[
 w_-[\xi_- -d_{c,D}^2(z_e)]_+^2\leq\ell_-(e).
\]
Taking square roots gives
$[\xi_- -d_{c,D}^2(z_e)]_+\leq\sqrt{\ell_-(e)/w_-}$.
Since $x\leq[x]_+$, rearrangement yields
\[
 d_{c,D}^2(z_e)\geq
 \xi_- -\sqrt{\ell_-(e)/w_-}.
\]
Combining this with $d_{c,D}^2(z_e)\geq0$ gives $d_{c,D}(z_e)\geq b_e$.
Thus every competitor exceeds the candidate distance by at least
$b_e-a_e>0$. The conservative radius bound above now proves the result.
\end{proof}

Supporting loss controls proximity to the candidate, opposing loss controls
separation from covered competitors, and the precision norms control
sensitivity to displacement. Positive opposing coverage of every competitor
is essential for this particular uniform guarantee; a zero opposing weight
provides no such control. Training need not attain $b_e>a_e$, and the bound
does not establish that the evidence endorses the semantically correct identity.

\subsection{When the Prototype Realization Also Changes}
\label{app:changing-realization}

A dataset modification can change both the embedding and its prototype
bank. The fixed-bank certificate must not be used unchanged in that case.
For completeness, consider two banks in the same $\mathbb R^p$, with the
same identity labels and positive-definite precisions. Write
$A_c=\Lambda_{c,D}^{1/2}$ and $\widetilde A_c=\widetilde\Lambda_c^{1/2}$,
and let $\widetilde z$ be the modified embedding. Suppose
\[
 \|\widetilde z-z_e\|_2\leq\epsilon_z,\quad
 \|\widetilde\mu_c-\mu_{c,D}\|_2\leq\epsilon_{\mu,c},\quad
 \|\widetilde A_c-A_c\|_{\mathrm{op}}\leq\epsilon_{A,c}.
\]
Define the nonnegative distance-drift bound
\[
 \eta_c\Def (L_{c,D}+\epsilon_{A,c})
                  (\epsilon_z+\epsilon_{\mu,c})
             +\epsilon_{A,c}\|z_e-\mu_{c,D}\|_2.
\]
Indeed, the identity
\begin{align*}
 \widetilde A_c(\widetilde z-\widetilde\mu_c)
       -A_c(z_e-\mu_{c,D})
 &=\widetilde A_c\bigl[(\widetilde z-z_e)
                         -(\widetilde\mu_c-\mu_{c,D})\bigr]\\
 &\quad+(\widetilde A_c-A_c)(z_e-\mu_{c,D})
\end{align*}
and the reverse triangle inequality imply
$|\widetilde d_c(\widetilde z)-d_{c,D}(z_e)|\leq\eta_c$, where
$\widetilde d_c(z)=\|\widetilde A_c(z-\widetilde\mu_c)\|_2$.
Therefore the original unique winner $c_*$ is preserved whenever
\[
 \Delta_{c,D}(e)>\eta_c+\eta_{c_*}
 \qquad\text{for every }c\ne c_*.
\]
This extension identifies the additional quantities that must be controlled
when support is recomputed. It does not assume that arbitrary changes of
dataset automatically satisfy these bounds.

\section{Sample Execution Trace}

\newcommand{\trajcell}[4]{%
  \RaggedRight
  \textbf{#1. #2}\enspace
  \texttt{[#3]}\enspace
  #4%
}

\newcommand{\trajanswer}[1]{%
  \RaggedRight
  \textbf{Answer.}\enspace #1%
}



\refstepcounter{figure}
\label{fig:pancancer-q1-trajectories}

\begingroup
\normalsize
\setlength{\tabcolsep}{5pt}
\renewcommand{\arraystretch}{1.10}
\setlength{\LTleft}{0pt}
\setlength{\LTright}{0pt}
\setlength{\LTpre}{0pt}
\setlength{\LTpost}{0pt}

\begin{longtable}{
  @{}||
  >{\RaggedRight\arraybackslash}
    p{\dimexpr0.5\textwidth-2\tabcolsep-3\arrayrulewidth-1.5\doublerulesep\relax}
  ||
  >{\RaggedRight\arraybackslash}
    p{\dimexpr0.5\textwidth-2\tabcolsep-3\arrayrulewidth-1.5\doublerulesep\relax}
  ||@{}
}


\hline\hline
\multicolumn{2}{||p{\dimexpr\textwidth-2\tabcolsep-4\arrayrulewidth-2\doublerulesep\relax}||}{%
  \RaggedRight
  \textbf{Query.}
  For LGG patients, compute the average log10-transformed expression
  of the IGF2 gene across different histology types. Include only valid
  IGF2 values and histology annotations not enclosed in square brackets.
  Report at least four decimal places.
}
\\
\hline\hline

\textbf{Permute EQ (correct)}\par
\textbf{Summary of Prompt:}
Answer the query using only the provided evidence payload and DAB data.
Use the available tools to verify the evidence and compute the final answer.
\par
\emph{13 agent steps; 31 tool calls}
&
\textbf{Permute Core (incorrect)}\par
\textbf{Summary of Prompt:}
Answer the query without internet access using only the provided DAB files.
Use the available tools to inspect the data and compute the final answer.
\par
\emph{20 agent steps; 20 tool calls}
\\[3pt]

\hline
\endfirsthead


\hline\hline
\multicolumn{2}{||c||}{%
  \small\itshape
  PANCANCER\_ATLAS Q1 agent trajectories---continued
}
\\
\hline

\textbf{Permute EQ (correct)}
&
\textbf{Permute Core (incorrect)}
\\[2pt]

\hline
\endhead


\hline
\multicolumn{2}{r}{%
  \small\itshape Continued on next page
}
\\
\endfoot


\hline\hline
\endlastfoot


\trajcell{1}{Evidence setup}{DuckDB $\times$2}
  {Read query interpretation and diagnostics.}
&
\trajcell{1}{Exploration}{semantic probe}
  {Profile clinical categories, including histology fields.}
\\[3pt]

\trajcell{2}{Evidence setup}{DuckDB $\times$2}
  {Read selected tasks, anchors, and resolved fields.}
&
\trajcell{2}{Exploration}{semantic probe}
  {Profile RNA-seq sample types.}
\\[3pt]

\trajcell{3}{Exploration}{DuckDB $\times$2}
  {Inspect clinical and RNA-seq schemas.}
&
\trajcell{3}{Disambiguation}{semantic probe}
  {Compare the three candidate histology fields.}
\\[3pt]

\trajcell{4}{Exploration}{DuckDB $\times$2}
  {Sample patient identifiers and expression records.}
&
\trajcell{4}{Exploration}{semantic probe}
  {Search patient descriptions for cohort terms.}
\\[3pt]

\trajcell{5}{Exploration}{DuckDB $\times$2}
  {Identify the LGG cohort and verify its text pattern.}
&
\trajcell{5}{Exploration}{DuckDB}
  {Inspect the clinical schema.}
\\[3pt]

\trajcell{6}{Disambiguation}{DuckDB $\times$2}
  {Count LGG patients and inspect ICD-O-3 groups.}
&
\trajcell{6}{Exploration}{DuckDB}
  {Sample RNA-seq expression records.}
\\[3pt]

\trajcell{7}{Disambiguation}{DuckDB $\times$2}
  {Compare histology fields and bracket patterns.}
&
\trajcell{7}{Exploration}{DuckDB}
  {Sample clinical records and histology values.}
\\[3pt]

\trajcell{8}{Exploration}{DuckDB $\times$2}
  {Verify the IGF2 symbol and expression validity.}
&
\trajcell{8}{Exploration}{DuckDB}
  {Inspect diagnosis frequencies.}
\\[3pt]

\trajcell{9}{Execution}{DuckDB $\times$2}
  {Join LGG patients to IGF2 and inspect sample types.}
&
\trajcell{9}{Exploration}{DuckDB}
  {Test cohort phrase extraction.}
\\[3pt]

\trajcell{10}{Execution}{DuckDB}
  {Compute candidate aggregates at both histology grains.}
&
\trajcell{10}{Exploration}{DuckDB}
  {Count glioma and lower-grade glioma records.}
\\[3pt]

\trajcell{11}{Verification}{DuckDB $\times$2}
  {Check joined records through alternate extracts.}
&
\trajcell{11}{Exploration}{DuckDB}
  {Enumerate glioma phrases in patient descriptions.}
\\[3pt]

\trajcell{12}{Verification}{calculator $\times$5}
  {Evaluate the unshifted log transform by code.}
&
\trajcell{12}{Execution}{DuckDB}
  {Isolate LGG patients and group descriptive histology labels.}
\\[3pt]

\trajcell{13}{Execution}{calculator $\times$5}
  {Compute the selected $\log_{10}(x+1)$ means by code.}
&
\trajcell{13}{Exploration}{DuckDB}
  {Inspect IGF2 coverage and sample types.}
\\[3pt]

\trajanswer{%
  \texttt{9382/3} 2.7136;
  \texttt{9400/3} 2.6014;
  \texttt{9401/3} 2.5584;
  \texttt{9450/3} 2.6967;
  \texttt{9451/3} 2.5826.
}
&
\trajcell{14}{Execution}{DuckDB}
  {Join LGG patients to valid IGF2 records.}
\\[4pt]

&
\trajcell{15}{Disambiguation}{DuckDB}
  {Compute candidates for all three histology fields.}
\\[3pt]

&
\trajcell{16}{Verification}{DuckDB}
  {Recompute descriptive-label means with an alternate join.}
\\[3pt]

&
\trajcell{17}{Verification}{Python}
  {Independently reproduce the descriptive-label result.}
\\[3pt]

&
\trajcell{18}{Disambiguation}{semantic probe}
  {Reinspect the candidate histology fields.}
\\[3pt]

&
\trajcell{19}{Execution}{DuckDB}
  {Finalize the three-label aggregation.}
\\[3pt]

&
\trajcell{20}{Selection}{result gate}
  {Select the internally consistent result at the wrong grain.}
\\[4pt]

&
\trajanswer{%
  Oligoastrocytoma 2.7136;
  Oligodendroglioma 2.6825;
  Astrocytoma 2.5713.
  The required five ICD-O-3 groups are not reported.
}
\\[3pt]

\end{longtable}

\vspace{3pt}

\noindent
\textbf{Figure~\thefigure.}
Observed agent trajectories for PANCANCER$\_$ATLAS Q1. Each numbered item represents one agent step; multipliers denote tool calls issued in parallel within that step. Both systems correctly identify the LGG cohort, join the clinical and expression data, and compute the requested transformation. They diverge at the grouping-field decision: Permute EQ selects the required ICD-O-3 histology groups, whereas Permute Core aggregates the same underlying records into three broader descriptive histology labels. Its numerical computation is therefore consistent, but its output is incorrect because it is reported at the wrong histology grain and omits the five required ICD-O-3 groups. The summaries describe visible actions and results, not hidden chain-of-thought. This selected example does not represent aggregate performance.

\endgroup

\begin{figure}[H]
  \centering
  \includegraphics[width=0.8\linewidth]{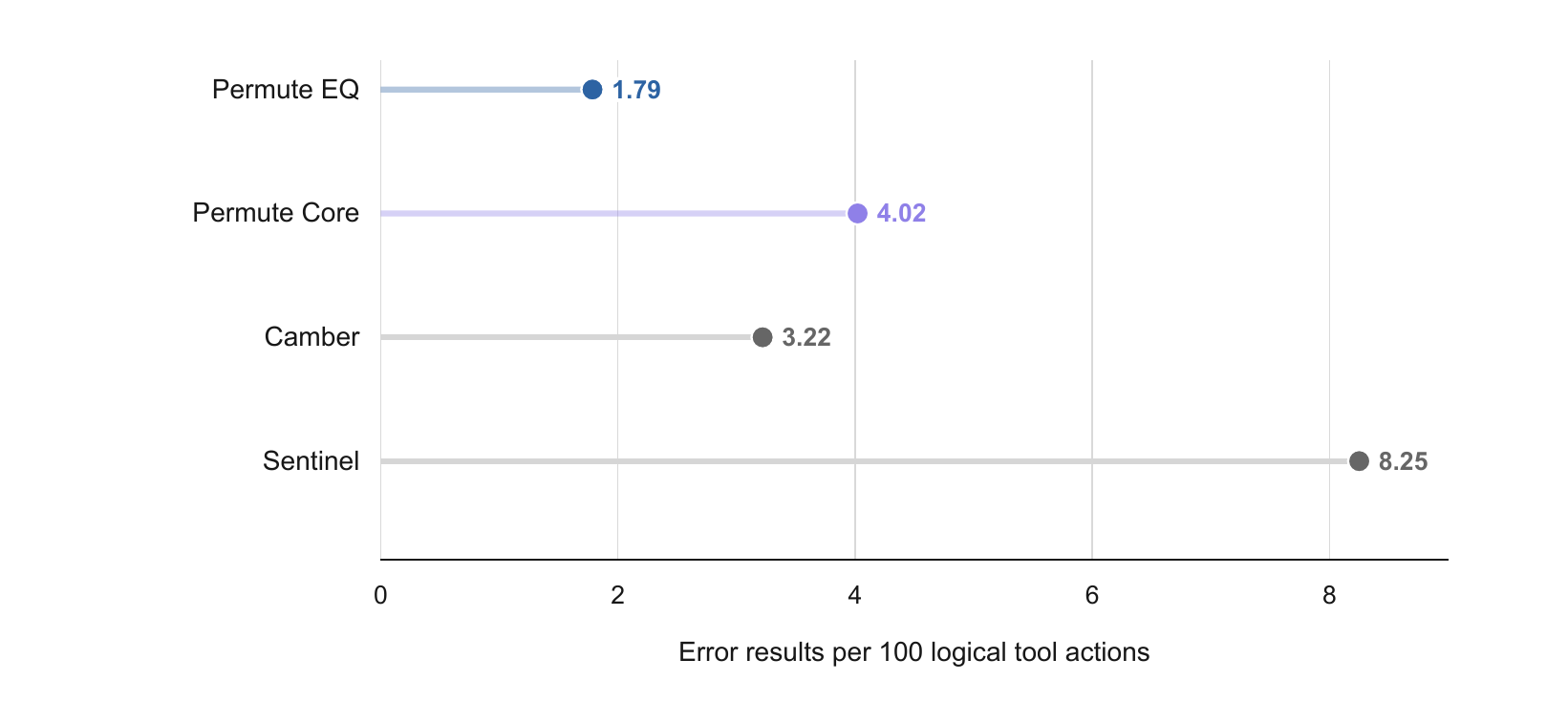}
  \caption{
    Aggregated tool errors taken from the tool traces
  }
  \label{fig:tool-errors-observed}
\end{figure}

\end{document}